\def\eqref#1{equation~\ref{#1}}
\def\1{\bm{1}}
\def\eps{{\epsilon}}
\DeclareMathAlphabet{\mathsfit}{\encodingdefault}{\sfdefault}{m}{sl}
\SetMathAlphabet{\mathsfit}{bold}{\encodingdefault}{\sfdefault}{bx}{n}
\def\gE{{\mathcal{E}}}
\newcommand{\EE}{\mathbb{E}}
\DeclareMathOperator*{\argmax}{arg\,max}
\newcommand{\wt}[1]{\widetilde{#1}}
\newcommand{\RR}{\mathbb{R}}
\newcommand{\poly}{\mathrm{poly}}
\def\cA{\mathcal{A}}
\def\cE{\mathcal{E}}
\def\cS{\mathcal{S}}
\def\cS{\mathcal{S}}
\newcommand{\norm}[1]{\left\|#1\right\|}
\newcommand{\abs}[1]{\left|#1\right|}
\newcommand{\expect}{\mathbb{E}}
\newcommand{\states}{\mathcal{S}}
\newcommand{\trans}{P}
\newcommand{\actions}{\mathcal{A}}
\newtheorem{thm}{Theorem}[section]
\newtheorem{lem}{Lemma}[section]
\newtheorem{asmp}{Assumption}[section]
\newcommand{\mdp}{\mathcal{M}}
\def\approxcorrect{\cmark\kern-1.4ex\raisebox{.30ex}{$\xmark$}}
\newcommand{\idxn}[1][]{\ifthenelse{\equal{#1}{}}{\mathsf{INDQ}_n}{\mathsf{INDQ}_{#1}}}
\title{
On Reward-Free Reinforcement Learning with \\
Linear Function Approximation
}
\author{
Ruosong Wang\\Carnegie Mellon University\\\texttt{ruosongw@andrew.cmu.edu} \and Simon S. Du\\ University of Washington, Seattle \\ \& Institute for Advanced Study\\ \texttt{ssdu@uw.edu} \and Lin F. Yang \\ University of California, Los Angles \\ \texttt{linyang@ee.ucla.edu} \and Ruslan Salakhutdinov \\Carnegie Mellon University \\ \texttt{rsalakhu@cs.cmu.edu}
}
\date{}
\begin{document}
\maketitle

\begin{abstract}
	Reward-free reinforcement learning (RL) is a framework which is suitable for both the batch RL setting and the setting where there are many reward functions of interest. During the exploration phase, an agent collects samples without using a pre-specified reward function. After the exploration phase, a reward function is given, and the agent uses samples collected during the exploration phase to compute a near-optimal policy. Jin et al. [2020] showed that in the tabular setting, the agent only needs to collect polynomial number of samples (in terms of the number states, the number of actions, and the planning horizon) for reward-free RL. However, in practice, the number of states and actions can be large, and thus function approximation schemes are required for generalization. In this work, we give both positive and negative results for reward-free RL with linear function approximation. We give an algorithm for reward-free RL in the linear Markov decision process setting where both the transition and the reward admit linear representations. The sample complexity of our algorithm is polynomial in the feature dimension and the planning horizon, and is completely independent of the number of states and actions. We further give an exponential lower bound for reward-free RL in the setting where only the optimal $Q$-function admits a linear representation. Our results imply several interesting exponential separations on the sample complexity of reward-free RL. 
 \end{abstract}

\section{Introduction}
\label{sec:intro}
In reinforcement learning (RL), an agent repeatedly interacts with an unknown environment to maximize the cumulative reward.
To achieve this goal, RL algorithm must be equipped with exploration mechanisms to effectively solve tasks with long horizons and sparse reward signals. 
Empirically, there is a host of successes by combining deep RL methods with different exploration strategies.
However, the theoretical understanding of exploration in RL by far is rather limited. 

In this work we study the reward-free RL setting which was formalized in the recent work by \citet{jin2020reward}.
There are two phases in the reward-free setting: the exploration phase and the planning phase.
During the exploration phase, the agent collects trajectories from an unknown environment without any pre-specified reward function.
Then, in the planning phase, a specific reward function is given to the agent, and the goal is to use samples collected during the exploration phase to output a near-optimal policy for the given reward function.
From a practical point of view, this paradigm is particularly suitable for 1) the batch RL setting~\citep{bertsekas1996neuro} where data collection and planning are explicitly separated and 2) the setting where there are multiple reward function of interest, e.g., constrained RL~\citep{achiam2017constrained,altman1999constrained,miryoosefi2019reinforcement,tessler2018reward}.
From a theoretical point view, this setting separates the exploration problem and the planning problem which allows one to handle them in a theoretically principled way, in contrast to the standard RL setting where one needs to deal both problems simultaneously.

Key in this framework is to collect a dataset with sufficiently good coverage over the state space during the exploration phase, so that one can apply a batch RL algorithm on the dataset~\citep{chen2019information,agarwal2019optimality,antos2008learning,munos2008finite} during the planning phase.
For the reward-free RL setting, existing theoretical works only apply to the tabular RL setting.
\citet{jin2020reward}~showed that in the tabular setting where the state space has bounded size, $\widetilde{O}(\mathrm{poly}(|\states||\actions|H) / \varepsilon^2)$ samples during the exploration phase is \emph{necessary and sufficient} in order to output $\varepsilon$-optimal policies in the planning phase. Here, $\abs{\states}$ is the number of states, $\abs{\actions}$ is the number of actions and $H$ is the planning horizon.

The sample complexity bound in~\citep{jin2020reward}, although being near-optimal in the tabular setting, can be unacceptably large in practice due to the polynomial dependency on the size of the state space.
For environments with a large state space, function approximation schemes are needed for generalization.
RL with linear function approximation is arguably the simplest yet most fundamental setting.
Clearly, in order to understand more general function classes, e.g., deep neural networks, one must understand the class of linear functions first.
In this paper, we study RL with linear function approximation in the reward-free setting, and our goal is to answer the following question:
\begin{center}
\emph{
Is it possible to design provably efficient RL algorithms with linear function approximation in the reward-free setting?
}
\end{center}

We obtain both polynomial upper bound and hardness result to the above question. 
 
\paragraph{Our Contributions.}
Our first contribution is a provably efficient algorithm for reward-free RL under the linear MDP assumption~\citep{yang2019sample,jin2019provably}, which, roughly speaking, requires both the transition operators and the reward functions to be linear functions of a $d$-dimensional feature extractor given to the agent.
See Assumption~\ref{asmp:lin_mdp} for the formal statement of the linear MDP assumption.
Our algorithm, formally presented in Section~\ref{sec:algo}, samples $\widetilde{O}\left(d^3H^6/\varepsilon^2\right)$ trajectories during the exploration phase, and outputs $\varepsilon$-optimal policies for an arbitrary number of reward functions satisfying Assumption~\ref{asmp:lin_mdp} during the planning phase with high probability.
Here $d$ is the feature dimension, $H$ is the planning horizon and $\varepsilon$ is the required accuracy. 

One may wonder whether is possible to further weaken the linear MDP assumption, since it requires the feature extractor to encode model information, and such feature extractor might be hard to construct in practice. 
Our second contribution is a hardness result for reward-free RL under the linear $Q^*$ assumption, which only requires the optimal value function to be a linear function of the given feature extractor and thus weaker than the linear MDP assumption.
Our hardness result, formally presented in Section~\ref{sec:hardness}, shows that under the linear $Q^*$ assumption, any algorithm requires exponential number of samples during the exploration phase, so that the agent could output a near-optimal policy during the planning phase with high probability.
The hardness result holds even when the MDP is deterministic. 

Our results highlight the following conceptual insights.
\begin{itemize}
\item \textbf{Reward-free RL might require the feature to encode model information.} Under model-based assumption (linear MDP assumption), there exists a polynomial sample complexity upper bound for reward-free RL, while under value-based assumption (linear $Q^*$ assumption), there is an exponential sample complexity lower bound.
Therefore, the linear $Q^*$ assumption is {\em strictly weaker} than the linear MDP assumption in the reward-free setting. 
\item \textbf{Reward-free RL could be exponentially harder than standard RL.}  For deterministic systems, under the assumption that the optimal $Q$-function is linear, there exists a polynomial sample complexity upper bound~\citep{wen2013efficient} in the standard RL setting. However, our hardness result demonstrates that under the same assumption, any algorithm requires exponential number of samples in the reward-free setting. 
\item \textbf{Simulators could be exponentially more powerful.} 
In the setting where the agent has sampling access to a generative model (a.k.a. simulator) of the MDP, the agent can query the next state $s'$ sampled from the transition operator given any state-action pair as input.
In the supplementary material, we show that for deterministic systems, under the linear $Q^*$ assumption, there exists a polynomial sample complexity upper bound in the reward-free setting when the agent has sampling access to a generative model.
Compared with the hardness result above, this upper bound demonstrates an exponential separation between the sample complexity of reward-free RL in the generative model and that in the standard RL model. 
To the best our knowledge, this is the first exponential separation between the standard RL model and the generative model for a natural question.
\end{itemize}

 \subsection{Related Work}
\label{sec:rel}
Practitioners have proposed various exploration algorithms for RL without using explicit reward signals~\citep{oudeyer2007intrinsic,schmidhuber2010formal,bellemare2016unifying, houthooft2016vime, tang2017exploration,florensa2017automatic, pathak2017curiosity, tang2017exploration, achiam2017constrained,hazan2018provably,burda2018exploration,colas2018curious,co2018self,nair2018visual, eysenbach2018diversity,
pong2019skew}.
Theoretically, for the tabular case, while the reward-free setting is first formalized in \cite{jin2020reward}, algorithms in earlier works also guarantee to collect a polynomial-size dataset with coverage guarantees~\citep{brafman2002r,hazan2018provably,du2019decoding,misra2019kinematic}.\footnote{\citet{du2019decoding,misra2019kinematic} studied the rich-observation setting where the observations are generated from latent states. The latent state dynamics is a tabular one.}
\citet{jin2020reward} gave a new algorithm which has $\widetilde{O}(\abs{\states}^2\abs{\actions}\poly(H) / \varepsilon^2)$ sample complexity.
They also provided a lower bound showing the dependency of their algorithm on $\abs{\states}, \abs{\actions}$ and $\varepsilon$ is optimal up to logarithmic factors.
One of questions asked in \citep{jin2020reward} is whether their result can be generalized to the function approximation setting.

This paper studies linear function approximation.
Linear MDP is the setting where both the transition and the reward are linear functions of a given feature extractor. 
Recently, in the standard RL setting, many works~\citep{yang2019sample,jin2019provably,cai2019provably,zanette2019frequentist} have provided polynomial sample complexity guarantees for different algorithms in linear MDPs.
Technically, our algorithm, which works in the reward-free setting, combines the algorithmic framework in~\citep{jin2019provably} with a novel exploration-driven reward function (cf. Section~\ref{sec:algo}). 
Linear $Q^*$ is another setting where only the optimal $Q$-function is assumed to be linear, which is weaker than the assumptions in the linear MDP setting.
In the standard RL setting, it is an open problem whether one can use polynomial number of samples to find a near-optimal policy in the linear $Q^*$ setting~\citep{Du2020Is}.
Existing upper bounds all require additional assumptions, such as (nearly) deterministic transition~\cite{wen2013efficient,du2019provably,du2020agnostic}.

\section{Preliminaries}
\label{sec:pre}
Throughout this paper, for a given positive integer $N$, we use $[N]$ to denote the set $\{1, 2, \ldots, N\}$.
\subsection{Episodic Reinforcement Learning}
\label{sec:mdp}

Let $\mdp =\left(\states, \actions, P ,r, H, \mu\right)$ be a \emph{Markov decision process} (MDP)
where $\states$ is the state space, 
$\actions$ is the action space with bounded size, 
$P = \{P_h\}_{h = 1}^H$ where 
$P_h: \states \times \actions \rightarrow \Delta\left(\states\right)$ is the transition operator in level $h$ which takes a state-action pair and returns a distribution over states, 
$r = \{r_h\}_{h = 1}^H$ where
$r_h : \states \times \actions \rightarrow [0, 1]$ is the deterministic reward function\footnote{We assume the reward function is deterministic only for notational convience. Our results can be readily generalized to the case that rewards are stochastic.} in level $h$, 
$H \in \mathbb{Z}_+$ is the planning horizon  (episode length),
and $\mu \in \Delta\left(\states\right)$ is the initial state distribution. 

When the initial distribution $\mu$ and the transition operators $P = \{P_h\}_{h = 1}^H$ are all deterministic, we say $\mdp$ is a {\em deterministic system}.
In this case, we may regard each transition operator $P_h : \states \times \actions \to \states$ as a function that maps state-action pairs to a states.
We note that deterministic systems are special cases of general MDPs.

A policy $\pi$ chooses an action $a \in \actions$ based on the current state $s \in \states$ and the time step $h \in [H]$. 
Formally, $\pi = \{\pi_h\}_{h = 1}^H$ where for each $h \in [H]$, $\pi_h : \states \to \actions$ maps a given state to an action.
The policy $\pi$ induces a trajectory $s_1,a_1,r_1,s_2,a_2,r_2,\ldots,s_{H},a_{H},r_{H}$,
where $s_1 \sim \mu$, $a_1 = \pi_1(s_1)$, $r_1 = r_1(s_1,a_1)$, $s_2 \sim P(s_1,a_1)$, $a_2 = \pi_2(s_2)$, $r_2 = r_2(s_2, a_2)$, etc.

An important concept in RL is the $Q$-function.
For a specific set of reward functions $r = \{r_h\}_{h = 1}^H$, 
given a policy $\pi$, a level $h \in [H]$ and a state-action pair
$(s,a) \in \states \times \actions$, the $Q$-function is defined as
\[
Q_h^\pi(s,a, r) = \expect\left[\sum_{h' = h}^{H} r_{h'}(s_{h'}, a_{h'})\mid s_h =s, a_h = a, \pi\right].
\]
Similarly, the value function of a given state
$s \in \states$ is defined as
\[
V_h^\pi(s, r)=\expect\left[\sum_{h' = h}^{H}r_{h'}(s_{h'}, a_{h'})\mid s_h =s,
  \pi\right].
\]
For a specific set of reward functions $r = \{r_h\}_{h = 1}^H$, 
We use $\pi^*_r$ to denote an optimal policy with respect to $r$, i.e., $\pi^*_r$ is a policy that maximizes \[\expect\left[\sum_{h = 1}^H r_h(s_h, a_h) \mid \pi\right].\]
We also denote $Q_h^*(s,a, r) = Q_h^{\pi^*_r}(s,a, r)$
and $V_h^*(s, r) = V_h^{\pi^*_r}(s, r)$.
We say a policy $\pi$ is $\varepsilon$-optimal with respect to $r$ if \[\expect \left[\sum_{h=1}^{H} r_h(s_h, a_h) \mid \pi\right] \ge \expect \left[\sum_{h=1}^{H} r_h(s_h, a_h)\mid \pi^*_r\right] - \varepsilon.\]
Throughout the paper, when $r$ is clear from the context, we may omit $r$ from $Q_h^\pi(s,a, r)$, $V_h^\pi(s, r)$, $Q_h^*(s,a, r)$, $V_h^*(s, r)$ and $\pi^*_r$.

\subsection{Linear Function Approximation}
\label{sec:lin_mdp}
When applying linear function approximation schemes, it is commonly assumed that the agent is given a feature extractor $\phi : \states \times \actions \to \mathbb{R}^d$ which can either be hand-crafted or a pre-trained neural network that transforms a state-action pair to a $d$-dimensional embedding, and the model or the $Q$-function can be predicted by linear functions of the features. In this section, we consider two different kinds of assumptions: a model-based assumption (linear MDP) and a value-based assumption (linear $Q^*$).

\paragraph{Linear MDP.} The following linear MDP assumption, which was first introduced in~\citep{yang2019sample,jin2019provably}, states that the model of the MDP can be predicted by linear functions of the given features. 
\begin{asmp}[Linear MDP]
	\label{asmp:lin_mdp}
An MDP $\mdp =\left(\states, \actions, P ,r, H, \mu\right)$ is said to be a linear MDP if the followings hold:
\begin{enumerate}
\item there are $d$ unknown signed measures $\mu_h = (\mu_h^{(1)}, \mu_h^{(2)}, \ldots, \mu_h^{(d)})$ such that for any $(s,a,s') \in \states \times \actions \times \states$, $\trans_h\left(s'\mid s,a\right) = \left\langle \mu_h(s'), \phi\left(s,a\right)\right\rangle$;
\item there exists $H$ unknown vectors $\eta_1, \eta_2, \ldots, \eta_H \in \mathbb{R}^d$ such that for any $(s,a) \in \states \times \actions 
$, $r_h(s,a) = \left\langle\phi(s,a),\eta_h\right\rangle$.
\end{enumerate}
As in~\citep{jin2019provably}, we assume for all $(s,a) \in \states \times \actions$ and $h \in [H]$, $\norm{\phi(s,a)} \le 1$, $\norm{\mu_h(S)}_2 \le \sqrt{d}$, and $\norm{\eta}_2 \le \sqrt{d}$.
\end{asmp}
\paragraph{Linear $Q^*$.}
 The following linear $Q^*$ assumption, which is a common assumption in the theoretical RL literature (see e.g.~\citep{du2019provably, Du2020Is}), states that the optimal $Q$-function can be predicted by linear functions of the given features.

\begin{asmp}[Linear $Q^*$]
	\label{asmp:lin_q_star}
An MDP $\mdp =\left(\states, \actions, P ,r, H, \mu\right)$ satisfies the linear $Q^*$ assumption if there exist $H$ unknown vectors $\theta_1, \theta_2, \ldots, \theta_H \in \mathbb{R}^d$ such that for any $(s,a) \in \states \times \actions$,
$Q^*_h(s,a) = \left\langle\phi(s,a),\theta_h\right\rangle$.
We assume $\norm{\phi(s,a)} \le 1$ and $\norm{\theta_h}_2 \le \sqrt{d}$ for all $(s,a) \in \states \times \actions$ and $h \in [H]$.
\end{asmp}

We note that Assumption~\ref{asmp:lin_q_star} is weaker than Assumption~\ref{asmp:lin_mdp}.
Under Assumption~\ref{asmp:lin_mdp}, it can be shown that for any policy $\pi$, $Q_h^{\pi}(\cdot, \cdot)$ is a linear function of the given feature extractor $\phi(\cdot, \cdot)$.
In this paper, we show that Assumption~\ref{asmp:lin_q_star} is {\em strictly weaker} than Assumption~\ref{asmp:lin_mdp} in the reward-free setting, meaning that reward-free RL under Assumption~\ref{asmp:lin_q_star} is {\em exponentially} harder than that under Assumption~\ref{asmp:lin_mdp}.

\subsection{Reward-Free RL}
In the reward-free setting, the goal is to design an algorithm that efficiently explore the state space without the guidance of reward information. 
Formally, there are two phases in the reward-free setting: {\em exploration phase} and {\em planning phase}.

\paragraph{Exploration Phase.} During the exploration phase, the agent interacts with the environment for $K$ episodes.
In the $k$-th episode, the agent chooses a policy $\pi^k$ which induces a trajectory. 
The agent observes the states and actions $s_1^k, a_1^k, s_2^k, a_2^k, \ldots, s_h^k, a_h^k$ as usual, but does not observe any reward values. 
After $K$ episodes, the agent collects a dataset of visited state-actions pairs $\mathcal{D} = \{(s_h^k, a_h^k)\}_{(k,h) \in [K] \times[H]}$ which will be used in the planning phase.

\paragraph{Planning Phase.} During the planning phase, the agent is no longer allowed to interact with the MDP. Instead, the agent is given a set of reward functions $\{r_h\}_{h = 1}^H$ where $r_h : \states \times \actions \rightarrow [0, 1]$ is the deterministic reward function in level $h$,
and the goal here is to output an $\varepsilon$-optimal policy with respect to $r$ using the collected dataset $\mathcal{D}$. 

To measure the performance of an algorithm, we define the {\em sample complexity}  to be the number of episodes $K$ required in the exploration phase to output an $\varepsilon$-optimal policy in the planning phase.

\section{Reward-Free RL for Linear MDPs}
\label{sec:algo}
In this section, we present our reward-free RL algorithm under the linear MDP assumption.
\subsection{The Algorithm}
The exploration phase of the algorithm is presented in Algorithm~\ref{algo:main}, and the planning phase is presented in Algorithm~\ref{algo:batch_ls}.

\begin{algorithm}[!t]
	\caption{Reward-Free RL for Linear MDPs: Exploration Phase}
	\label{algo:main}
	\begin{algorithmic}[1]
		\State \textbf{Input}:  Failure probability $\delta > 0$ and target accuracy $\varepsilon > 0$
		\State $\beta\gets c_{\beta}\cdot dH\sqrt{\log(dH\delta^{-1}\varepsilon^{-1})}$ for some $c_{\beta}>0$
		\State $K \gets c_K \cdot d^3H^6\log(dH  \delta^{-1}\varepsilon^{-1}) / \varepsilon^2$ for some $c_K > 0$
		\For{$k=1,2,\ldots K$}
		\State $Q^k_{H+1}(\cdot,\cdot)\gets 0$ and $V^k_{H + 1}(\cdot) = 0$
		\For{$h=H, H-1, \ldots, 1$}
		\State $\Lambda_h^k\gets \sum_{\tau=1}^{k-1}\phi(s^{\tau}_h, a_h^{\tau}) \phi(s^{\tau}_h, a_h^{\tau})^\top + I$ \label{line:covariance_exploration}
		\State $u^k_h(\cdot, \cdot)\gets\min\left\{\beta\cdot \sqrt{\phi(\cdot,\cdot)^\top(\Lambda_h^k)^{-1}\phi(\cdot,\cdot)}, H\right\}$ \label{line:ucb}
		\State Define the exploration-driven reward function 
		$
		r_{h}^k(\cdot, \cdot) \gets u^k_h(\cdot, \cdot)/H
		$
		\State $w_h^k\gets (\Lambda_h^k)^{-1}
		\sum_{\tau=1}^{k-1}\phi(s_h^\tau, a_h^\tau)\cdot V^{k}_{h+1}(s_{h+1}^{\tau})
		$
		\State $Q_h^k(\cdot, \cdot)\gets \min\{(w_h^k)^\top \phi(\cdot, \cdot) +r_{h}^k(\cdot, \cdot) +  u_h^{k}(\cdot, \cdot), H\}$ and $V_h^k(\cdot) = \max_{a \in \actions} Q_h^k(\cdot, a)$
		\State $\pi^k_h(\cdot) \gets \argmax_{a \in \actions} Q_h^k(\cdot, a)$
		\EndFor
		\State Receive initial state $s_1^k\sim \mu$
		\For{$h=1, 2, \ldots H$}
			\State Take action $a_h^k\gets \pi^k(s_h^k)$ and observe $s_{h+1}^k \sim P_h(s_h^k, a_h^k)$
		\EndFor
		\EndFor
		\State \Return $\mathcal{D}\gets\{(s^k_h, a^k_h)\}_{(k,h)\in [K]\times[H]}$
	\end{algorithmic}
	\label{algo:explore}
\end{algorithm}

\begin{algorithm}[!t]
	\caption{Reward-Free RL for Linear MDPs: Planning Phase}
\label{algo:batch_ls}
	\begin{algorithmic}[1]
		\State \textbf{Input}: Dataset $\mathcal{D}=\{(s^k_h, a^k_h)\}_{(k,h)\in [K]\times[H]}$, reward functions $r = \{r_h\}_{h \in [H]}$
		\State $Q_{H+1}(\cdot,\cdot)\gets 0$ and $V_{H + 1}(\cdot) = 0$
		\For{step $h=H, H-1, \ldots, 1$}
		\State $\Lambda_h\gets \sum_{\tau=1}^{K}\phi(s^{\tau}_h, a_h^{\tau})\phi(s^{\tau}_h, a_h^{\tau})^\top + I$ \label{line:covariance_planning}
		\State Let $u_h(\cdot, \cdot)\gets\min\left\{\beta\cdot \sqrt{\phi(\cdot,\cdot)^\top(\Lambda_h)^{-1}\phi(\cdot,\cdot)}, H\right\}$ \label{line:ucb_planning}
\State $w_h\gets (\Lambda_h)^{-1}
		\sum_{\tau=1}^{K}\phi(s_h^\tau, a_h^\tau)\cdot 
		V_{h+1}(s_{h+1}^{\tau}, a)
		$
		\State $Q_h(\cdot, \cdot)\gets \min\{(w_h)^\top \phi(\cdot, \cdot) + r_{h}(\cdot, \cdot) +  u_h(\cdot, \cdot), H\}$ and $V_h(\cdot) = \max_{a \in \actions}Q_h(\cdot, a)$
		\State $\pi_h(\cdot) \gets \argmax_{a \in \actions} Q_h(\cdot, a)$
		\EndFor

		\State \textbf{Return} $\pi = \{\pi_h\}_{h \in [H]}$
	\end{algorithmic}
\end{algorithm}

\paragraph{Exploration Phase.} During the exploration phase of the algorithm, we employ the least-square value iteration (LSVI) framework introduced in~\citep{jin2019provably}.
In each episode, we first update the parameters $(\Lambda_h, w_h)$ that are used to calculate the $Q$-functions, and then execute the greedy policy with respect to the updated $Q$-function to collect samples. 
As in~\citep{jin2019provably}, to encourage exploration, Algorithm~\ref{algo:main} adds an upper-confidence bound (UCB) bonus function $u_h$.

The main difference between Algorithm~\ref{algo:main} and the one in~\citep{jin2019provably} is the definition of the \emph{exploration-driven} reward function.
Since the algorithm in~\citep{jin2019provably} is designed for the standard RL setting, the agent can obtain reward values by simply interacting with the environment.
On the other hand, in the exploration phase of the reward-free setting, the agent does not have any knowledge about the reward function. 
In our algorithm, in each episode, we design an exploration-driven reward function which is defined to be $r_h(\cdot, \cdot) = u_h(\cdot, \cdot) / H$, where $u_h(\cdot, \cdot)$ is the UCB bonus function defined in Line~\ref{line:ucb}.
Note that we divide $ u_h(\cdot, \cdot)$ by $H$ so that $r_h(\cdot, \cdot)$ always lies in $[0, 1]$.
Intuitively, such a reward function encourages the agent to explore state-action pairs where the amount of uncertainty (quantified by $u_h(\cdot, \cdot)$) is large.
After sufficient number of episodes, the uncertainty of all state-action pairs should be low on average, since otherwise the agent would have visited those state-action pairs with large uncertainty as guided by the reward function.

\paragraph{Planning Phase.}
After the exploration phase, the returned dataset contains sufficient amount of information for the planning phase.
In the planning phase (Algorithm~\ref{algo:batch_ls}), for each step $h = H, H - 1, \ldots, 1$, we optimize a least squares predictor to predict the $Q$-function, and return the greedy policy with respect to the predicted $Q$-function. 
During the planning phase, we still add an UCB bonus function $u_h(\cdot, \cdot)$ to guarantee optimism. 
However, as mentioned above and will be made clear in the analysis, since the agent has acquired sufficient information during the exploration phase, $u_h(\cdot, \cdot)$ should be small on average, which implies the returned policy is near-optimal. 
\subsection{Analysis}
In this section we outline the analysis of our algorithm.
The formal proof is deferred to the supplementary material. 
We first give the formal theoretical guarantee of our algorithm. 
\begin{thm}\label{thm:main}
After collecting $O\left(d^3H^6\log(dH \delta^{-1}\varepsilon^{-1}) / \varepsilon^2\right)$ trajectories during the exploration phase,
with probability $1 - \delta$, our algorithm outputs an $\varepsilon$-optimal policy for an arbitrary number of reward functions satisfying Assumption~\ref{asmp:lin_mdp} during the planning phase.
\end{thm}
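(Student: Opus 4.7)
My plan is to decouple the analysis into (a) a \emph{planning-phase reduction} that bounds the suboptimality of the output policy by an expected bonus $\EE_\pi[\sum_h u_h]$, and (b) an \emph{exploration-phase guarantee} that this bonus is uniformly small over every policy $\pi$. Part (a) automatically handles arbitrarily many target rewards chosen after exploration, since the resulting bound is uniform in $r$. Concretely, conditioning on the ``good event'' of self-normalized concentration
\[
\bigl|(w_h)^\top \phi(s,a) - \EE_{s' \sim P_h(s,a)}[V_{h+1}(s')]\bigr| \le \beta\sqrt{\phi(s,a)^\top \Lambda_h^{-1}\phi(s,a)}
\]
(holding uniformly over $(s,a,h)$ and over every $V_{h+1}$ produced by Algorithm~\ref{algo:main} or Algorithm~\ref{algo:batch_ls}), the planning $Q$-function is optimistic for the target reward $r$, and the standard one-step decomposition
\[
V_h(s) - V_h^\pi(s,r) \le 2 u_h(s,\pi_h(s)) + \EE_{s' \sim P_h(s,\pi_h(s))}[V_{h+1}(s') - V_{h+1}^\pi(s', r)]
\]
telescopes to $V_1^*(s_1, r) - V_1^\pi(s_1, r) \le 2\,\EE_\pi[\sum_{h=1}^H u_h(s_h, a_h)]$, with no explicit dependence on $r$ apart from through $\pi$.

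The central step is the uniform bonus bound $\max_\pi \EE_\pi[\sum_h u_h(s_h, a_h)] \le \widetilde{O}(H^3 d^{3/2}/\sqrt{K})$. Writing $u_h = u_h^{K+1}$, so that $r_h^{K+1} = u_h^{K+1}/H$, this quantity equals $H\cdot V_1^*(s_1, r^{K+1})$. Monotonicity of $\Lambda_h^k$ in PSD order gives $r_h^k \ge r_h^{K+1}$ pointwise for all $k$, so $V_1^*(s_1, r^{K+1}) \le \tfrac{1}{K}\sum_{k=1}^K V_1^*(s_1, r^k)$. For each $k$, exploration-phase optimism $V_1^*(s_1, r^k) \le V_1^k(s_1)$ combined with the same Bellman decomposition as above yields
\[
V_1^*(s_1, r^k) \le V_1^{\pi^k}(s_1, r^k) + 2\,\EE_{\pi^k}\bigl[\sum_h u_h^k(s_h, a_h)\bigr].
\]
Since $r_h^k = u_h^k/H$, also $V_1^{\pi^k}(s_1, r^k) = \tfrac{1}{H}\EE_{\pi^k}[\sum_h u_h^k(s_h, a_h)]$, so summing over $k$ reduces everything to $\sum_k \EE_{\pi^k}[\sum_h u_h^k(s_h, a_h)]$. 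I would then (i) pass from expectation to the empirical sum $\sum_{k,h} u_h^k(s_h^k, a_h^k)$ via Azuma--Hoeffding on the bounded per-episode martingale, and (ii) apply the elliptical potential lemma together with Cauchy--Schwarz to conclude $\sum_{k,h} u_h^k(s_h^k,a_h^k) \le \widetilde O(H\beta\sqrt{Kd})$. With $\beta = \widetilde O(dH)$ this yields the claimed uniform bound, and demanding $\widetilde O(H^3 d^{3/2}/\sqrt{K}) \le \varepsilon$ gives the sample complexity $K = \widetilde O(d^3 H^6/\varepsilon^2)$ in the theorem.

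The main obstacle will be justifying the good event with the stated $\beta = \widetilde O(dH)$. In the LSVI-UCB analysis of \citet{jin2019provably} this is done by an $\varepsilon$-cover of the value class $\{w^\top \phi + \min\{\beta\sqrt{\phi^\top\Lambda^{-1}\phi}, H\}\}$ whose log-covering number is $\widetilde O(d^2)$. Here, however, $V_h^k$ additionally carries the exploration reward $r_h^k = \min\{\beta\sqrt{\phi^\top (\Lambda_h^k)^{-1}\phi},H\}/H$, which is parameterized by a \emph{second} random PSD matrix. The cover must be enlarged to range over both matrices jointly at appropriate resolution, and one must check that the log-covering number remains $\widetilde O(d^2)$ so that the self-normalized Abbasi-Yadkori bound still closes with $\beta = \widetilde O(dH)$. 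This is the only substantively new technical piece beyond the tools of \citet{jin2019provably} combined with the elliptical-potential and martingale bookkeeping sketched above.
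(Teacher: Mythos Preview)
Your proposal is correct and follows essentially the same route as the paper: the chain $V_1^*(s,r)-V_1^\pi(s,r)\le V_1(s)-V_1^\pi(s,r)\le 2\,\EE_\pi[\sum_h u_h]\le 2H\,V_1^*(s,u/H)\le \tfrac{2H}{K}\sum_k V_1^*(s,r^k)\le \tfrac{2H}{K}\sum_k V_1^k(s_1^k)$ (up to an Azuma term), followed by the elliptical potential bound, is exactly the content of Lemmas~\ref{lem:sum_V}--\ref{lem:confidence_planning} in the paper. The covering obstacle you flag dissolves more easily than you anticipate: because $r_h^k=u_h^k/H$ is built from the \emph{same} matrix $\Lambda_h^k$ as the bonus, one has $r_h^k+u_h^k=(1+1/H)\,u_h^k$, so the exploration value function is $\min\{\max_a w^\top\phi(\cdot,a)+(1+1/H)\min\{\beta\sqrt{\phi^\top\Lambda^{-1}\phi},H\},\,H\}$, parameterized by a single pair $(w,\Lambda)$ exactly as in \citet{jin2019provably}; no second matrix or enlarged cover is needed.
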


Now we show how to prove Theorem~\ref{thm:main}. Our first lemma shows that the estimated value functions $V^{k}$ are optimistic with high probability, and the summation of $V^k_1(s_1^k)$ should be small.
\begin{lem}\label{lem:sum_V}
With probability $1-\delta / 2$, for all $k \in [K]$, 
\[
V^{*}_1(s_1^k, r^k)
\le
V^{k}_1(s_1^k)
\]
and 
\[
\sum_{k=1}^{K}V^{k}_1(s_1^k)
\le c\sqrt{d^3H^4K\cdot\log(dKH/\delta)}
\]
for some constant $c > 0$ where $V^k_1(\cdot)$ is as defined in Algorithm~\ref{algo:main}.
\end{lem}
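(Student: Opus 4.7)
\textbf{Proof proposal for Lemma~\ref{lem:sum_V}.}

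The plan is to adapt the LSVI-UCB analysis of \citet{jin2019provably} to the exploration-driven reward $r_h^k(\cdot,\cdot)=u_h^k(\cdot,\cdot)/H$. The central analytical ingredient we import is the standard concentration lemma for linear MDPs: for a suitable choice of the constant $c_\beta$ in $\beta$, with probability at least $1-\delta/4$, for every $k\in[K]$, $h\in[H]$ and $(s,a)\in\states\times\actions$,
\[
\bigl|\phi(s,a)^\top w_h^k - (P_h V_{h+1}^k)(s,a)\bigr|\ \le\ \beta\sqrt{\phi(s,a)^\top(\Lambda_h^k)^{-1}\phi(s,a)}\ \le\ u_h^k(s,a).
\]
This is because, under Assumption~\ref{asmp:lin_mdp}, $(P_h V_{h+1}^k)(\cdot,\cdot)$ is linear in $\phi$, and $w_h^k$ is its ridge-regression estimator; a covering argument over the class of value functions produced by Algorithm~\ref{algo:main} (same class as in \citep{jin2019provably}) handles the dependence between $V_{h+1}^k$ and past data.

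\emph{Optimism.} I would first prove $V_1^k(s_1^k)\ge V_1^*(s_1^k,r^k)$ by backward induction on $h$. The base case $V_{H+1}^k\equiv 0=V_{H+1}^*$ is immediate. Assuming $V_{h+1}^k(\cdot)\ge V_{h+1}^*(\cdot,r^k)$ pointwise, the concentration bound above gives, for every $(s,a)$, $\phi(s,a)^\top w_h^k + u_h^k(s,a)\ge (P_h V_{h+1}^k)(s,a)\ge (P_h V_{h+1}^*(\cdot,r^k))(s,a)$. Adding $r_h^k(s,a)$ and noting the $\min\{\cdot,H\}$ clipping is harmless since $Q_h^*(\cdot,\cdot,r^k)\le H$, we obtain $Q_h^k\ge Q_h^{*}(\cdot,\cdot,r^k)$, and then $V_h^k\ge V_h^{*}(\cdot,r^k)$ by taking the greedy action. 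This yields the first claim.

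\emph{Sum bound.} Writing $(s_h^k,a_h^k)$ for the trajectory generated by $\pi^k$, I would telescope using $V_h^k(s_h^k)=Q_h^k(s_h^k,a_h^k)$ (greedy) and the same concentration bound in the \emph{other} direction, $\phi(s,a)^\top w_h^k\le (P_h V_{h+1}^k)(s,a)+u_h^k(s,a)$, to obtain
\[
V_h^k(s_h^k)\ \le\ r_h^k(s_h^k,a_h^k)+2u_h^k(s_h^k,a_h^k)+(P_h V_{h+1}^k)(s_h^k,a_h^k).
\]
Since $r_h^k=u_h^k/H\le 2u_h^k$, iterating from $h=1$ to $H$ and using a martingale decomposition gives
\[
V_1^k(s_1^k)\ \le\ \Bigl(2+\tfrac{1}{H}\Bigr)\sum_{h=1}^H u_h^k(s_h^k,a_h^k)+M_k,
\]
where $M_k=\sum_{h=1}^H\bigl((P_h V_{h+1}^k)(s_h^k,a_h^k)-V_{h+1}^k(s_{h+1}^k)\bigr)$ is a martingale difference with $|M_k|\le 2H^2$. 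By Azuma-Hoeffding, $\sum_{k=1}^K M_k\le O(H^2\sqrt{K\log(1/\delta)})$ with probability $1-\delta/4$.

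\emph{Controlling the bonuses.} The remaining ingredient is the elliptic potential lemma applied to $\{\phi(s_h^k,a_h^k)\}_{k=1}^K$: for each $h$,
\[
\sum_{k=1}^K\min\!\Bigl\{1,\ \phi(s_h^k,a_h^k)^\top(\Lambda_h^k)^{-1}\phi(s_h^k,a_h^k)\Bigr\}\ \le\ 2d\log\!\Bigl(1+\tfrac{K}{d}\Bigr),
\]
so by Cauchy-Schwarz $\sum_k u_h^k(s_h^k,a_h^k)\le H\cdot 2\beta\sqrt{Kd\log(1+K/d)}$ (splitting the $\min\{\cdot,H\}$ at threshold $1$ in the natural way). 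Summing over $h$, substituting $\beta=O(dH\sqrt{\log(dHK/\delta\varepsilon)})$, and combining with the martingale bound yields $\sum_{k=1}^K V_1^k(s_1^k)\le c\sqrt{d^3H^4K\log(dKH/\delta)}$, matching the claimed constant. The main subtlety — and the only step requiring care beyond the cited results — is ensuring the covering argument in the concentration lemma covers the new $Q$-function class, which now includes the extra additive term $r_h^k=u_h^k/H$; but this term is itself $\beta/H$-times a square root of a quadratic form in $(\Lambda_h^k)^{-1}$, i.e.\ of the same parametric form already in the class, so the same covering number bound (up to constants absorbed into $c_\beta$) applies.
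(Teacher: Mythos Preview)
Your proposal is correct and follows essentially the same route as the paper: the concentration bound via the covering argument of \citet{jin2019provably} (with the key observation that $r_h^k+u_h^k=(1+1/H)u_h^k$ keeps the value-function class unchanged), backward induction for optimism, and the telescoping decomposition into a martingale term handled by Azuma--Hoeffding plus a $(2+1/H)\sum_{k,h}\beta\|\phi_h^k\|_{(\Lambda_h^k)^{-1}}$ term handled by the elliptic potential lemma. The only slip is the spurious extra factor of $H$ in your intermediate bound $\sum_k u_h^k\le H\cdot 2\beta\sqrt{Kd\log(1+K/d)}$ (since $\Lambda_h^k\succeq I$ and $\|\phi\|\le 1$ already force $\phi^\top(\Lambda_h^k)^{-1}\phi\le 1$, no ``split at threshold $1$'' is needed and the $H$ should not appear); dropping it gives the claimed $\sqrt{d^3H^4K\log(dKH/\delta)}$ directly.
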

Note that the definition of the exploration driven reward function $r^k$ used in the $k$-th episode depends only on samples collected during the first $k - 1$ episodes. 
Therefore, the first part of the proof is nearly identical to that of Theorem 3.1 in~\citep{jin2019provably}.
To prove the second part of the lemma, we first recursively decompose $V^{k}_1(s_1^k)$ (similar to the standard regret decomposition for optimistic algorithms), and then use the fact that $r_h(\cdot) = u_h(\cdot) / H$ and the elliptical potential lemma in~\citep{abbasi2012online} to given an upper bound on $\sum_{k = 1}^K V^{k}_1(s_1^k)$.
The formal proof is provided in the supplementary material.

Our second lemma shows that with high probability, if one divides the bonus function $u_h(\cdot, \cdot)$ (defined in Line~\ref{line:ucb_planning} in Algorithm~\ref{algo:batch_ls}) by $H$ and uses it as a reward function, then the optimal policy has small cumulative reward on average. 
\begin{lem}\label{lem:small_V}
	With probability $1-\delta / 4$, for the function $u_h(\cdot, \cdot)$ defined in Line~\ref{line:ucb_planning} in Algorithm~\ref{algo:batch_ls}, we have
\[
\EE_{s\sim\mu}\left[V^{*}_1(s, u_h / H)\right]
\le c'\sqrt{d^3 H^4\cdot \log(dKH/\delta)/K}
\]
	for some absolute constant $c'>0$.
\end{lem}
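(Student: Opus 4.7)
The plan is to combine the aggregate bound from Lemma~\ref{lem:sum_V} with a martingale concentration argument that converts the per-episode quantity $V^k_1(s_1^k)$ into the population expectation under $\mu$.

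First, I would extract from the proof of Lemma~\ref{lem:sum_V} (i.e.\ from the LSVI analysis of~\citep{jin2019provably}) the \emph{pointwise} optimism statement $Q^k_h(s,a)\ge Q^*_h(s,a,r^k)$ for all $(s,a,h)$, which immediately gives $V^k_1(s)\ge V^*_1(s,r^k)$ for every $s$, not merely at the realized $s_1^k$. Next, I would compare the planning-phase bonus to the exploration-phase bonus: since $\Lambda_h=\Lambda^k_h+\sum_{\tau\ge k}\phi(s_h^\tau,a_h^\tau)\phi(s_h^\tau,a_h^\tau)^\top\succeq\Lambda^k_h$ for every $k\in[K]$, one has $(\Lambda_h)^{-1}\preceq(\Lambda^k_h)^{-1}$ and therefore $u_h(s,a)\le u^k_h(s,a)$ pointwise, so the planning reward $r_h:=u_h/H$ is dominated by each $r^k_h=u^k_h/H$. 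Monotonicity of the optimal value in the reward then yields, for every $s$ and every $k$,
\[
V^*_1(s,u_h/H)\;\le\;V^*_1(s,r^k)\;\le\;V^k_1(s).
\]

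The remaining step is to pass from the empirical sum $\sum_k V^k_1(s_1^k)$, which Lemma~\ref{lem:sum_V} already controls, to the population expectation. This is the main conceptual obstacle: although the initial states $s_1^k$ are i.i.d.\ from $\mu$, the reward $u_h/H$ depends on all $K$ episodes, so one cannot apply Hoeffding to $V^*_1(s_1^k,u_h/H)$ directly. The key observation is that $V^k_1$ is $\gF_{k-1}$-measurable, where $\gF_{k-1}$ is the $\sigma$-algebra generated by episodes $1,\ldots,k-1$, because $V^k_1$ is computed from $\Lambda^k_h=\sum_{\tau<k}\phi\phi^\top+I$ together with data from episodes $1,\ldots,k-1$. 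Since $s_1^k$ is drawn from $\mu$ independently of $\gF_{k-1}$, the sequence $V^k_1(s_1^k)-\EE_{s\sim\mu}[V^k_1(s)]$ is a martingale difference sequence bounded by $H$, and Azuma--Hoeffding gives, with probability at least $1-\delta/8$,
\[
\sum_{k=1}^K \EE_{s\sim\mu}\bigl[V^k_1(s)\bigr]\;\le\;\sum_{k=1}^K V^k_1(s_1^k)+O\bigl(H\sqrt{K\log(1/\delta)}\bigr).
\]

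Finally, combining the display above with the sum bound from Lemma~\ref{lem:sum_V} (on the same high-probability event, via a union bound accounting for the $\delta/2$ and $\delta/8$ budgets) and invoking the pointwise inequality $V^*_1(s,u_h/H)\le V^k_1(s)$, I obtain
\[
K\cdot\EE_{s\sim\mu}\bigl[V^*_1(s,u_h/H)\bigr]\;\le\;\sum_{k=1}^K \EE_{s\sim\mu}\bigl[V^k_1(s)\bigr]\;\le\;O\Bigl(\sqrt{d^3H^4K\log(dKH/\delta)}\Bigr),
\]
and dividing by $K$ yields the desired $O(\sqrt{d^3H^4\log(dKH/\delta)/K})$ bound (the $H\sqrt{\log(1/\delta)/K}$ term is absorbed). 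The main obstacle, as noted, is the subtle dependence between the reward $u_h/H$ (built from all $K$ episodes) and the i.i.d.\ initial states; the trick is to avoid directly bounding $\EE_{s\sim\mu}[V^*_1(s,u_h/H)]$ by concentration, instead dominating it by $\EE_{s\sim\mu}[V^k_1(s)]$ for each $k$ separately, since each $V^k_1$ is built from only the first $k-1$ episodes and is therefore amenable to a standard martingale argument.
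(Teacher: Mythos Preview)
Your proposal is correct and follows essentially the same route as the paper: the $\Lambda_h\succeq\Lambda_h^k$ monotonicity to dominate $u_h/H$ by each $r^k_h$, an Azuma--Hoeffding step exploiting that the $k$-th episode's relevant function is $\gF_{k-1}$-measurable, and Lemma~\ref{lem:sum_V} to bound the empirical sum. The only cosmetic difference is \emph{which} martingale you concentrate: the paper applies Azuma to $V_1^*(s_1^k,r^k)-\EE_{s\sim\mu}[V_1^*(s,r^k)]$ and then invokes the stated optimism $V_1^*(s_1^k,r^k)\le V_1^k(s_1^k)$ at the realized state, whereas you apply Azuma to $V_1^k(s_1^k)-\EE_{s\sim\mu}[V_1^k(s)]$ and need the \emph{pointwise} optimism $V_1^*(s,r^k)\le V_1^k(s)$ for all $s$, which, as you correctly note, is established inside the proof of Lemma~\ref{lem:sum_V} rather than in its statement.
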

To prove Lemma~\ref{lem:small_V}, we first note that $\EE_{s\sim\mu}\left[\sum_{k=1}^{K}V^{*}_1(s, r^k)\right]$ is close to $\sum_{k=1}^{K}V^{*}_1(s_1^k, r^k)$ by Azuma–Hoeffding inequality
and  $\sum_{k=1}^{K}V^{*}_1(s_1^k, r^k)$ can be bounded by using Lemma~\ref{lem:sum_V}.
Moreover, for $\Lambda_h$ defined in Line~\ref{line:covariance_planning} in Algorithm~\ref{algo:batch_ls}, we have $\Lambda_h \succeq \Lambda_h^k$ for all $k \in [K]$ where $\Lambda_h^k$ is defined in Line~\ref{line:covariance_exploration} in Algorithm~\ref{algo:main}, 
which implies $u_h(\cdot, \cdot) / H \le r_h^k(\cdot, \cdot)$ for all $k \in [K]$.
Therefore, we have \[\EE_{s\sim\mu}\left[V^{*}_1(s, u_h / H)\right] \le \EE_{s\sim\mu}\left[V^{*}_1(s, r^k)\right]\] for all $k \in [K]$, which implies the desired result.

Our third lemma states the estimated $Q$-function is always optimistic, and is upper bounded by $r_h(\cdot, \cdot) 
+ \sum_{s'}P_{h}(s' \mid \cdot, \cdot)V_{h+1}(s')$ plus the UCB bonus function $u_h(\cdot, \cdot)$. 
The lemma can be proved using the same concentration argument as in~\citep{jin2019provably}.
\begin{lem}\label{lem:confidence_planning}
With probability $1-\delta / 2$, 
for an arbitrary number of reward functions satisfying Assumption~\ref{asmp:lin_mdp} and all $h \in [H]$,
we have
\[
Q^*_h(\cdot,\cdot, r)\le Q_h(\cdot, \cdot) 
\le
r_h(\cdot, \cdot) 
+ \sum_{s'}P_{h}(s' \mid \cdot, \cdot)V_{h+1}(s')
+
2u_h(\cdot, \cdot)
.\]
\end{lem}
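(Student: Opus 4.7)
My plan is to reduce both inequalities to a single uniform concentration bound on the least-squares regression step and then to proceed by backward induction on $h$, adapting the LSVI-UCB analysis of \citet{jin2019provably}. By Assumption~\ref{asmp:lin_mdp}, for any bounded function $V:\states\to[0,H]$ the one-step Bellman backup is linear in $\phi$: $(P_h V)(s,a)=\langle \phi(s,a),w_h^{V}\rangle$ where $w_h^{V}:=\int V(s')\,d\mu_h(s')$ satisfies $\|w_h^{V}\|_2\le H\sqrt{d}$. Substituting $V=V_{h+1}$ from Algorithm~\ref{algo:batch_ls} yields the standard decomposition
\[
\phi(s,a)^\top\bigl(w_h - w_h^{V_{h+1}}\bigr) \;=\; -\phi(s,a)^\top \Lambda_h^{-1} w_h^{V_{h+1}} \;+\; \phi(s,a)^\top\Lambda_h^{-1}\sum_{\tau=1}^{K}\phi(s_h^\tau,a_h^\tau)\bigl[V_{h+1}(s_{h+1}^\tau)-(P_h V_{h+1})(s_h^\tau,a_h^\tau)\bigr].
\]
By Cauchy--Schwarz each piece is at most $\sqrt{\phi(s,a)^\top\Lambda_h^{-1}\phi(s,a)}$ times the $\Lambda_h$-norm of the corresponding vector; the regularization term is deterministically $O(H\sqrt{d})$, so it remains to control the self-normalized martingale term.

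The main obstacle is that $V_{h+1}$ depends on the dataset $\mathcal{D}$ (through $w_{h+1}$ and $\Lambda_{h+1}$) and, in the planning phase, also on the adversarially supplied reward $r$, so one cannot invoke a self-normalized Hoeffding inequality directly at the realized $V_{h+1}$. I will overcome this by proving uniform concentration over a sup-norm $\varepsilon$-net of the function class
\[
\mathcal{V}=\Bigl\{\,s\mapsto \min\bigl\{\max_{a\in\actions}\bigl[\langle\phi(s,a),\tilde w\rangle+\beta\sqrt{\phi(s,a)^\top\Lambda^{-1}\phi(s,a)}\bigr],\;H\bigr\}\,:\,\|\tilde w\|_2\le L,\;\Lambda\succeq I\Bigr\},
\]
where $L=O(H\sqrt{dK})$ absorbs both $w_{h+1}$ and the reward weight $\eta_{h+1}$ (recall $r_{h+1}$ is linear in $\phi$ with $\|\eta_{h+1}\|_2\le\sqrt{d}$ under Assumption~\ref{asmp:lin_mdp}, so $(w_{h+1})^\top\phi+r_{h+1}=(w_{h+1}+\eta_{h+1})^\top\phi$). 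A standard covering computation gives $\log\mathcal{N}_{1/K}(\mathcal{V})=O(d^2\log(dHK))$, and combining this with the self-normalized martingale bound and a union bound over the net yields, for a sufficiently large constant $c_\beta$ and with probability at least $1-\delta/2$, the uniform guarantee
\[
\bigl|\phi(s,a)^\top w_h - (P_h V_{h+1})(s,a)\bigr|\;\le\;\beta\sqrt{\phi(s,a)^\top\Lambda_h^{-1}\phi(s,a)}
\]
simultaneously for every $h\in[H]$, every $(s,a)\in\states\times\actions$, and every reward function $r$ compatible with Assumption~\ref{asmp:lin_mdp}. Uniformity over $r$ is automatic here because the net depends only on $\mathcal{D}$ and not on the reward queried at planning time.

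Given this bound, both inequalities follow by backward induction from the trivial base $Q_{H+1}^*\equiv Q_{H+1}\equiv 0$. For the upper bound, by definition $Q_h(s,a)\le \phi(s,a)^\top w_h + r_h(s,a)+u_h(s,a)$, so whenever $\beta\sqrt{\phi^\top\Lambda_h^{-1}\phi}\le H$ the concentration inequality directly gives $Q_h(s,a)\le r_h(s,a)+(P_hV_{h+1})(s,a)+2u_h(s,a)$; in the saturated case $u_h=H$ the inequality is trivial since $r_h+P_hV_{h+1}+2u_h\ge 2H\ge Q_h$. For the optimism inequality $Q_h^*\le Q_h$, assume inductively $V_{h+1}^*\le V_{h+1}$; then $(P_h V_{h+1}^*)(s,a)\le (P_h V_{h+1})(s,a)\le \phi(s,a)^\top w_h+u_h(s,a)$, whence $Q_h^*(s,a)=r_h(s,a)+(P_h V_{h+1}^*)(s,a)\le \phi(s,a)^\top w_h + r_h(s,a)+u_h(s,a)$, which together with $Q_h^*\le H$ yields $Q_h^*\le Q_h$ and hence $V_h^*\le V_h$, closing the induction.
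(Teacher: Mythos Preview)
Your proposal is correct and follows essentially the same route as the paper's proof: both reduce the lemma to a single self-normalized concentration bound $|\phi(s,a)^\top w_h-(P_hV_{h+1})(s,a)|\le\beta\|\phi(s,a)\|_{\Lambda_h^{-1}}$ obtained via a covering argument over the LSVI-UCB value-function class (the paper simply cites ``the same argument'' as in its Lemma~\ref{lem:sum_V}, which in turn invokes Lemma~B.3/D.6 of \citet{jin2019provably}), and then derive the two inequalities by backward induction. Your treatment is slightly more explicit on one point the paper leaves implicit: you spell out that the linear reward $r_{h+1}=\langle\phi,\eta_{h+1}\rangle$ can be absorbed into the weight parameter so that the covering net is independent of the adversarial reward, which is exactly why the guarantee holds for ``an arbitrary number'' of rewards simultaneously.
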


Now we sketch how to prove Theorem~\ref{thm:main} by combining Lemma~\ref{lem:small_V} and Lemma~\ref{lem:confidence_planning}.
Note that With probability $1 - \delta$, the events defined in Lemma~\ref{lem:small_V} and Lemma~\ref{lem:confidence_planning} both hold.
Conditioning on both events, we have
\begin{align*}
&\EE_{s\sim \mu}[V^{*}_1(s, r)
- 
V^{\pi}_1(s, r)]
\le 
\EE_{s\sim \mu}[V_1(s)
- 
V^{\pi}_1(s, r)]\\
\le&
\EE_{s\sim \mu}[V^{\pi}_1(s, u)]
\le 
\EE_{s\sim \mu}[V^{*}_1(s, u)]
\le c'H\sqrt{d^3H^4\cdot \log(dKH/\delta)/K},
\end{align*}
where the first inequality follows by Lemma~\ref{lem:confidence_planning}, the second inequality follows by Lemma~\ref{lem:confidence_planning} and decomposing the $V$-function recursively, the third inequality follows by the definition of $V^*$, and the last inequality follows by Lemma~\ref{lem:small_V}.

 \section{Lower Bound for Reward-Free RL under Linear $Q^*$ Assumption}\label{sec:hardness}
In this section we prove lower bound for reward-free RL under the linear $Q^*$ assumption.
We show that there exists a class of MDPs which satisfies Assumption~\ref{asmp:lin_q_star}, 
such that any reward-free RL algorithm requires exponential number of samples during the exploration phase in order to find a near-optimal policy during the planning phase.
In particular, we prove the following theorem.
\begin{thm}\label{thm:lb}
	There exists a class of deterministic systems that satisfy Assumption~\ref{asmp:lin_q_star} with $d = \mathrm{poly}(H)$, such that any reward-free algorithm requires at least $\Omega(2^H)$ samples during the exploration phase in order to find a $0.1$-optimal policy with probability at least $0.9$ during the planning phase for a given set of reward functions $r = \{r_h\}_{h = 1}^H$.
\end{thm}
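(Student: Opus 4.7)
Prove the lower bound by a Yao-style reduction: exhibit a finite family $\{\mdp_v\}_{v\in\mathcal{V}}$ of deterministic MDPs indexed by $v\in\{-1,+1\}^{H'}$ with $H'=\Theta(H)$, all sharing the same state space, action space, and feature map $\phi$ of dimension $d=\operatorname{poly}(H)$, each satisfying Assumption~\ref{asmp:lin_q_star}, such that (i) the joint law of observed trajectories under any exploration strategy is the same across all $v$, and (ii) outputting a $0.1$-optimal policy requires the exploration to have ``covered'' a region of the state space that depends on $v$. Take the state space to be a complete binary tree of depth $H'$ rooted at $s_1$, with $\actions=\{-1,+1\}$; crucially, the dynamics are endowed with a hidden $\pm1$ label $\sigma_s$ at each internal node that permutes the action-to-child correspondence, so navigating to a target leaf requires observing a transition at each state along the way (pure action sequences cannot identify leaves). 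Let $\psi(s_h,a_h)\in\{-1,+1\}^h$ denote the intrinsic tree-position label of the successor of $(s_h,a_h)$ and set zero per-step reward except
\[
r^{(v)}_{H'}(s_{H'},a_{H'}) \;=\; \tfrac12 + \tfrac{1}{2H'}\langle \psi(s_{H'},a_{H'}),v\rangle\in[0,1].
\]

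\textbf{Linear $Q^*$.} A backward induction gives
\[
Q^*_h(s_h,a_h) \;=\; \tfrac12+\tfrac{1}{2H'}\Bigl(\sum_{i\le h}\psi_i(s_h,a_h)\,v_i+\sum_{i>h}|v_i|\Bigr).
\]
Packing the prefix $\psi(s_h,a_h)$ (padded with zeros), the tail indicator $(\mathbb{1}[i>h])_{i\in[H']}$, and a constant $1$ into $\phi(s_h,a_h)\in\R^{2H'+1}$, and the coefficients $\tfrac{v}{2H'}$, $\tfrac{|v|}{2H'}$, $\tfrac12$ into $\theta_h^{(v)}$, one has $Q^*_h(s_h,a_h) = \langle\phi(s_h,a_h),\theta_h^{(v)}\rangle$; a global rescaling enforces $\|\phi\|\le 1$ and $\|\theta_h^{(v)}\|\le\sqrt{d}$. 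Thus Assumption~\ref{asmp:lin_q_star} holds with $d=2H'+1=\operatorname{poly}(H)$.

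\textbf{Lower bound.} During exploration no reward is observed, so the joint law of trajectories under any (possibly randomized) algorithm depends only on the algorithm and the hidden $\sigma$-labels, not on $v$. Let $V_{\mathrm{obs}}$ be the (random) set of leaves visited in $K$ episodes; then $|V_{\mathrm{obs}}|\le K$ and its distribution is $v$-independent. A coupling argument --- flipping $\sigma_s$ at any internal node $s$ that was never visited yields an MDP with identical observation distribution along every observed trajectory --- shows that in the planning phase any policy returned by the agent yields reward on target $v$ at most $1-\min_{u\in V_{\mathrm{obs}}}d_H(u,v)/H'+o(1)$: beyond the first unvisited internal node on its execution trajectory, the remaining bits of the reached leaf become effectively independent fair coins, so extending past the visited subtree offers no $v$-dependent improvement in expectation. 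Hence $0.1$-optimality for target $v$ requires some $u\in V_{\mathrm{obs}}$ with $d_H(u,v)\le 0.1H'$; drawing $v$ uniformly on $\{-1,+1\}^{H'}$ and applying a union bound together with a Chernoff bound on $d_H(u,v)\sim\mathrm{Bin}(H',1/2)$ gives
\[
\Pr_v\bigl[\exists u\in V_{\mathrm{obs}}:\,d_H(u,v)\le 0.1H'\bigr]\;\le\;K\cdot 2^{-c_0H'}
\]
for an absolute $c_0>0$. Requiring the success probability to be at least $0.9$ forces $K\ge 2^{\Omega(H')}$; choosing $H'=C\cdot H$ with $C$ large enough yields $K=\Omega(2^H)$, and Yao's minimax principle supplies a specific $v$ on which the algorithm fails with probability $>0.1$.

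\textbf{Main obstacle.} The delicate step is the coupling/indistinguishability argument, and in particular ensuring that the feature map $\phi$ does not leak information about $\sigma_s$ at unvisited internal nodes. The construction is arranged so that $\phi(s_h,a_h)$ depends only on the intrinsic tree-position label of the successor $s_{h+1}$, a quantity revealed to the agent only by actually observing the transition $(s_h,a_h)\to s_{h+1}$; two instances differing solely in $\sigma_s$ at some unvisited $s$ therefore produce identical feature streams along every observed trajectory and are statistically indistinguishable to any algorithm's view. Establishing this indistinguishability rigorously (and combining it with the random actions the policy may take inside the unvisited region) is the technical crux; once it is in hand, the remaining calculation is the routine sphere-covering bound in $\{-1,+1\}^{H'}$.
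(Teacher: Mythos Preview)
Your construction has a genuine gap located exactly where you flag the ``main obstacle.'' In your instance the feature $\phi(s_h,a_h)$ encodes the successor's tree position $\psi(s_{h+1})$, whose last coordinate is $\sigma_{s_h}\cdot a_h$; hence $\phi$ is a function of the hidden labels $\sigma$ and therefore varies across instances in your hard class. But in the setting of Assumption~\ref{asmp:lin_q_star} the feature extractor is a \emph{fixed, known} map handed to the agent before any interaction. An agent given your $\phi$ can read off $\sigma_s$ at every node $s$ simply by evaluating $\phi(s,+1)$ and $\phi(s,-1)$, with no exploration at all; it then knows the full transition structure, and once the reward $r^{(v)}$ is revealed in the planning phase it recovers $v$ and outputs the optimal policy. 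Your proposed fix---that $\phi(s_h,a_h)$ is ``revealed only by actually observing the transition''---silently replaces the standard model (in which $\phi$ is a given function) by a strictly weaker oracle model in which $\phi$ can be queried only on executed state--action pairs; that is not what the theorem claims. There is no way to repair this within your scheme while keeping $\phi$ instance-independent: if $\phi$ depends only on $(\psi(s_h),a_h)$ then $Q^*_h$ depends on $\sigma_{s_h}$ while $\phi$ does not, and a single $\theta_h\in\R^d$ with $d=\operatorname{poly}(H)$ cannot absorb the $2^{h-1}$ hidden bits at level~$h$.

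The paper avoids this by keeping $\phi$ fixed across the entire hard class and placing the randomness only in the transitions at the last two levels. The first $H-2$ levels form a binary tree with \emph{known} transitions; a uniformly random pair $(s^*_{H-2},a^*_{H-2})$ is the unique one leading to a special state $s_{H-1}^+$, and an independent random bit $a^*_{H-1}$ completes the optimal trajectory. The features are fixed near-orthogonal unit vectors in $\R^d$ with $d=\operatorname{poly}(H)$ obtained via Johnson--Lindenstrauss, and setting $\theta_h=\tfrac12\phi(s^*_h,a^*_h)$ makes $Q^*$ exactly linear. The crucial design point is that the reward function given in the planning phase depends on $(s^*_{H-2},a^*_{H-2})$ but \emph{not} on $a^*_{H-1}$; thus an agent that never reached $s_{H-1}^+$ during exploration has zero information about $a^*_{H-1}$ and can only guess it. Since reaching $s_{H-1}^+$ means hitting one specific pair out of $2^{H-2}$, any deterministic algorithm with at most $2^H/100$ trajectories fails on all but a small fraction of instances, and Yao's principle gives the randomized lower bound. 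The conceptual difference from your attempt is that the hidden bit $a^*_{H-1}$ is decoupled from both $\phi$ and $r$, so neither oracle leaks it.
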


Since deterministic systems are special cases of general MDPs, the hardness result in Theorem~\ref{thm:lb} applies to general MDPs as well.
In the remaining part of this section, we describe the construction of the hard instance and outline the proof of Theorem~\ref{thm:lb}.
\begin{figure}[!h]
\centering
\includegraphics[scale=0.45]{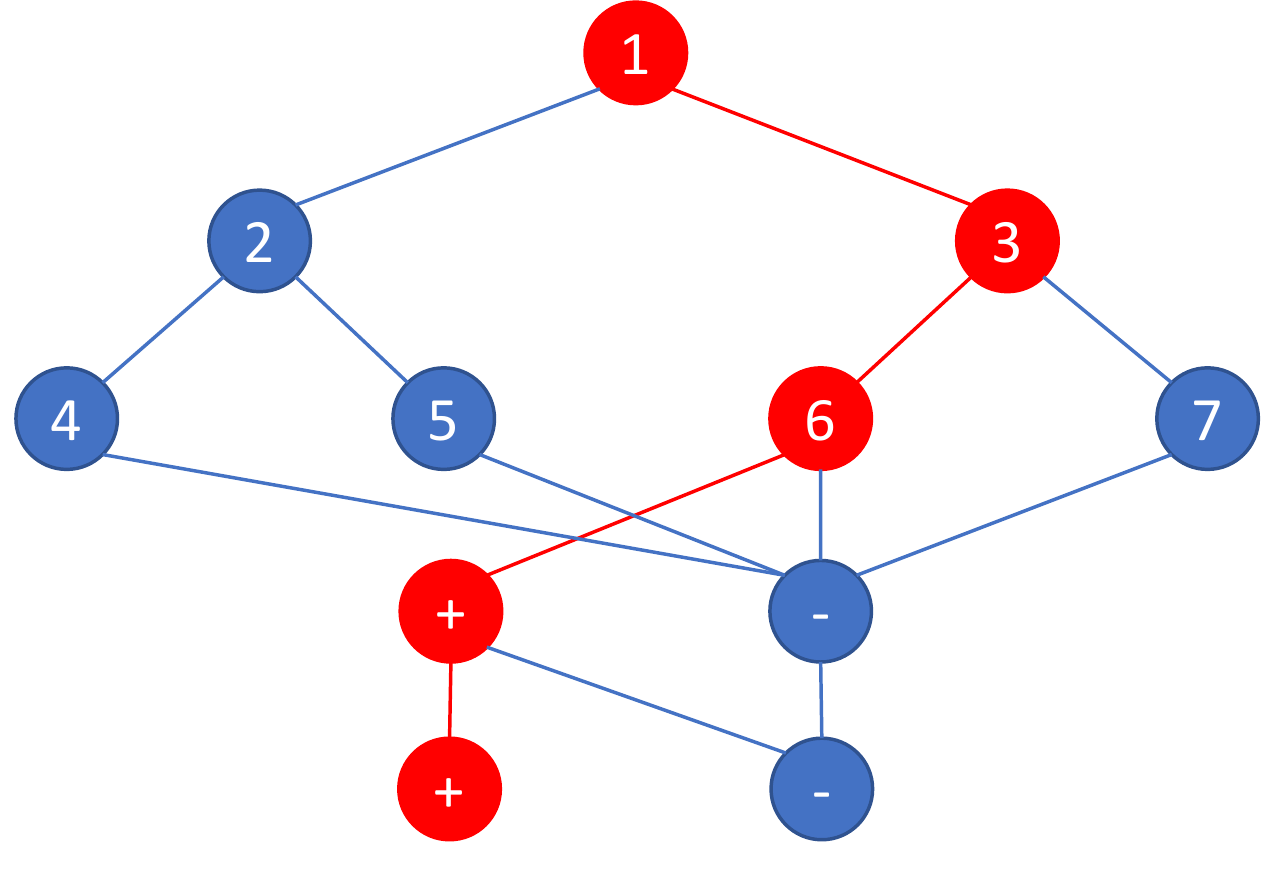}
\caption{An illustration of the hard instance with $H = 5$. Red states and transitions are those on the optimal trajectory $s_1^*, a_1^*, s_2^*, a_2^*, \ldots, s_{H - 1}^*, a_{H - 1}^*, s_H^*, a_H^*$.}
\end{figure}

\paragraph{State Space and Action Space.}
In the hard instance, there are $H$ levels of states \[\cS = \cS_1\cup \cS_2 \cup\ldots \cup \cS_H\] where $\cS_h$ contains all states that can be reached in level $h$.
The action space $\cA=\{0, 1\}$.
For each $h\in [H-2]$, we represent each state in $\cS_h$ by an integer in $[2^{h-1}, 2^h)$, i.e., $\cS_1 = \{1\}$, $\cS_{2}=\{2, 3\}$, $\cS_3=\{4, 5, 6, 7\}$, etc.
We also have $S_{H - 1} = \{s_{H - 1}^+, s_{H - 1}^-\}$ and  $S_H=\{s_H^{+}, s_H^{-}\}$.
The initial states is $1 \in \cS_1$.

\paragraph{Transition.} For each $h \in[H-3]$, for each $s \in \cS_h$, $P_h(s, a)$ is fixed and thus known to the algorithm. 
In particular, for each $h \in [H - 3]$, for each $s \in \cS_h$, 
we define $P_h(s, a) = 2s + a \in \cS_{h + 1}$ where $a \in \{0, 1\}$.
We will define the transition operator for those states $s \in \cS_{H - 2} \cup \cS_{H - 1}$ shortly.

\paragraph{Feature Extractor.}
For each $h \in [H - 2]$, for each $(s, a) \in \cS_h \times \actions$, we define $\phi(s, a) \in \mathbb{R}^{d}$ so that $\|\phi(s, a)\|_2 = 1$ and for any $(s', a') \in \cS_h \times \actions \setminus \{(s, a)\}$, we have $|\left(\phi(s, a)\right)^{\top} \phi(s', a')| \le 0.01$. 
In the supplementary material, we use the Johnson–Lindenstrauss Lemma~\citep{johnson1984extensions} to show that such feature extractor exists if $d = \mathrm{poly}(H)$.
We note that similar hard instance constructions for the feature extractor have previously appeared in~\citep{Du2020Is}.
However, we stress that our construction is different from that in~\citep{Du2020Is}.
In particular, in our hard instance the optimal $Q$-function is exactly linear, while for the hard instance in~\citep{Du2020Is}, the optimal $Q$-function is only approximately linear. 
Moreover, we focus on the reward-free setting while~\citet{Du2020Is} focused on the standard RL setting.

For all states $s \in \cS_{H - 1}$, we define
\[
\phi(s, a) = \begin{cases}
[1, 0, 0, \ldots, 0]^{\top} & s = s_{H - 1}^+, a = 0\\
[0, 1, 0, \ldots, 0]^{\top} & s = s_{H - 1}^+, a = 1\\
[0, 0, 0, \ldots, 0]^{\top} & s  = s_{H - 1}^-\\
\end{cases}.
\]
Finally, for all states $s \in \cS_H$, we define 
\[
\phi(s, a) = \begin{cases}
[1, 0, 0, \ldots, 0]^{\top} & s = s_H^+, a = 0\\
[0, 0, 0, \ldots, 0]^{\top} & \text{otherwise}
\end{cases}.
\]

\paragraph{The Hard MDPs.}
By Yao's minimax principle~\citep{yao1977probabilistic}, to prove a lower bound for randomized algorithms, it suffices to define a hard distribution and show that any deterministic algorithm fails for the hard distribution. 
We now define the hard distribution.
We first define the transition operator $P_{H - 2}(s, a)$ for those states $s \in \cS_{H - 2}$.
To do this, we first pick a state-action pair $(s_{H - 2}^*, a_{H - 2}^*)$ from $\states_{H - 2} \times \actions$ uniformly at random, and define
\[
P_{H - 2}(s, a) = \begin{cases}
s_{H - 1}^+  & s = s_{H - 2}^*, a = a_{H - 2}^* \\
s_{H - 1}^- & \text{otherwise}
\end{cases}.
\]
To define the transition function $P_{H - 1}(s, a)$ for those states $s \in \cS_{H - 1}$,
we pick a random action $a^*_{H - 1}$ from $\{0, 1\}$ uniformly at random, and define
\[
P_{H - 1}(s, a) = \begin{cases}
s_H^+  & s = s_{H - 1}^+, a = a^*_{H - 1} \\
s_H^- & \text{otherwise}
\end{cases}.
\]

\paragraph{The Reward Function.}
We now define the optimal $Q$-function which automatically implies a set of reward function $r = \{r_h\}_{h = 1}^H$.
During the planning phase, the agent will receive $r$ as the reward functions.
By construction, there exists a unique trajectory $s_1^*, a_1^*, s_2^*, a_2^*, \ldots, s_{H - 1}^*, a_{H - 1}^*, s_H^*, a_H^*$ with $(s_H^*, a_H^*) = (s_H^+, 0)$.
For each $h \in [H ]$, we define $\theta_h$ in Assumption~\ref{asmp:lin_q_star} as $\phi(s_h^*, a_h^*) / 2$.
This implies that for each $(s, a) \in \cS_{H} \times \actions$,
\[
r_H(s, a) = Q^*_{H}(s, a) = \begin{cases}
0.5 & s = s^*_{H}, a = a^*_{H} \\
0 & \text{otherwise}
\end{cases}.
\]
For each $(s, a) \in \cS_{H - 1} \times \actions$, we have
\[
Q^*_{H - 1}(s, a) = \begin{cases}
0.5 & s = s^*_{H - 1}, a = a^*_{H - 1} \\
0 & \text{otherwise}
\end{cases},
\]
which implies that $r_{H - 1}(s, a) = 0$ for all $(s, a) \in \states_{H - 1} \times \actions$.
Now for each $h \in [H - 2]$, for each $(s, a) \in \cS_{h} \times \actions$, we define
$
r_h(s_h, a_h) = Q^*_h(s_h, a_h) - \max_{a \in \actions} Q^*_{h + 1}(\cdot, a)
$
so that the Bellman equations hold. 
Moreover, by construction, for each $h \in [H ]$, we have $Q^*_h(s, a) = 0.5$ when $(s, a) = (s_h^*, a_h^*)$, and $|Q^*_h(s, a)| \le 0.01$ when $(s, a) \neq (s_h^*, a_h^*)$ and thus $r_h(\cdot, \cdot) \in [-0.02, 0.5]$.\footnote{Note that this is slightly different from the assumption that $r_h(\cdot, \cdot) \in [0, 1]$. However, this can be readily fixed by shifting all reward values by $0.02$.}

\paragraph{Proof of Hardness.}
Now we sketch the final proof of the hardness result.
We define $\mathcal{E}$ to be the event that for all $(s, a) \in \mathcal{D}$ where $\mathcal{D}$ are the state-action pairs collected by the algorithm, we have $s \neq s_{H - 1}^* = s_{H - 1}^+$.
For any deterministic algorithm, we claim that if the algorithm samples at most $2^H / 100$ trajectories during the exploration phase, with probability at least $0.9$ over the randomness of the distribution of MDPs, $\mathcal{E}$ holds. 
This is because the feature extractor is fixed and thus the algorithm receives the same feedback before reaching $s_{H - 1}^+$.
Since there are $2^{H - 2}$ state-action pairs $(s, a) \in \cS_{H - 2} \times \actions$ and only one of them satisfies $P_{H - 2}(s, a) = s_{H - 1}^+$, and the algorithm samples at most $2^H / 100$ trajectories during the exploration phase, $\mathcal{E}$ holds with probability at least $0.9$.

Now during the planning phase, by construction of the optimal $Q$-function, the only $0.1$-optimal policy is $\pi_h(s_{h}^*) = a_h^*$.
However, conditioned on $\mathcal{E}$, any deterministic algorithm correctly output $\pi_{H - 1}(s_{H - 1}^*) = a_{H - 1}^*$ with probability at most $0.5$, since conditioned on $\mathcal{E}$, $\mathcal{D}$ does not contain $s_{H - 1}^*$, and the set of reward functions $r = \{r_h\}_{h = 1}^H$ also does not depend on $a_{H - 1}^*$.
Therefore, during the planning phase of the algorithm, a $0.1$-optimal policy is found with probability at most $0.6 < 0.9$.

\section{Conclusion}
\label{sec:con}
This paper provides both positive and negative results for reward-free RL with linear function approximation.
Our results imply three new exponential separations: 1) linear MDP v.s. linear $Q^*$,  2) standard RL v.s. reward-free RL, and 3) query with a simulator v.s. query without a simulator.
An interesting future direction is to generalize our results to more general function classes using techniques, e.g., in~\citep{wen2013efficient,ayoub2020model,wang2020provably}.

\section*{Acknowledgments}
RW and RS are supported in part by NSF IIS1763562,
AFRL CogDeCON FA875018C0014, and DARPA SAGAMORE HR00111990016.
SSD is supported by NSF grant DMS-1638352 and the Infosys Membership.

\bibliography{simonduref}
\bibliographystyle{plainnat}

\newpage
\appendix
\section{Missing Proofs in Section~\ref{sec:algo}}
In this section, for all $(k,h)\in [K]\times[H]$, we denote 
\[
\phi_h^{k} := \phi(s^{k}_h, a^{k}_h).
\]
In Algorithm~\ref{algo:explore} and \ref{algo:batch_ls}, we recall that
\[
\beta= c_{\beta}dH\sqrt{\log(dH/\delta/\eps)}.
\]
Since
$K= c_K \cdot d^3H^6\log(dH  \delta^{-1}\varepsilon^{-1})/\varepsilon^2$, 
we have 
\[
\beta\ge 
c_{\beta}dH\sqrt{\log(dHK/\delta)}
\]
for appropriate choices of $c_{\beta}$ and $c_{K}$.
\subsection{Proof of Lemma~\ref{lem:sum_V}}
To prove Lemma~\ref{lem:sum_V}, we need a concentration lemma similar to Lemma~B.3 in~\citep{jin2019provably}.
\begin{lem}
\label{lem:concentration}
Suppose Assumption~\ref{asmp:lin_mdp} holds.  Let $\cE$ be the event that for all $ (k,h)\in [K]\times[H]$,
\[ 
\left\|\sum_{\tau=1}^{k-1}\phi_h^{\tau}
\left(V_{h+1}^{k}(s^{\tau}_{h+1})-\sum_{s'\in \cS}P_h(s'|s_h^\tau, a_h^\tau) V_{h+1}^{k}(s')\right)
\right\|_{(\Lambda_h^k)^{-1}}\le c\cdot dH \sqrt{\log(dKH/\delta)}
\]
for some absolute constant $c>0$.
Then  $\Pr[\gE]\ge 1-\delta/4$.
\end{lem}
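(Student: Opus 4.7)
The plan is to follow the self-normalized concentration strategy of Abbasi-Yadkori et al.\ (2011), combined with a uniform covering argument over the class of value functions $V_{h+1}^k$, exactly in the spirit of Lemma~B.3 in~\citep{jin2019provably}. The central difficulty is that $V_{h+1}^k$ depends on all past trajectories $\{(s_h^\tau, a_h^\tau, s_{h+1}^\tau)\}_{\tau<k}$ through the least-squares weights $w_h^k$ and the data-dependent matrix $\Lambda_h^k$, so the summands are not a sum of independent random variables and a naive Azuma argument is insufficient. The remedy is to cover the (low-dimensional) parametric class to which every $V_{h+1}^k$ belongs, then apply a vector-valued Bernstein/Hoeffding inequality to each net element and union-bound.

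First I would characterize the function class $\gV$ that contains $V_{h+1}^k$ for every $k$. From Algorithm~\ref{algo:explore}, using $r_h^k = u_h^k/H$, every candidate value function has the form
\[
V(s) = \max_{a\in \cA}\min\Bigl\{ w^\top \phi(s,a) + (1 + H^{-1}) \min\bigl\{\beta \sqrt{\phi(s,a)^\top A^{-1}\phi(s,a)}, H\bigr\}, H\Bigr\}
\]
for some $w\in\R^d$ with $\|w\|_2 \le H\sqrt{dK}$ and PSD matrix $A\succeq I$ with $\|A\|_{\mathrm{op}} \le K+1$; these bounds follow from standard LSVI parameter bounds together with $\|\phi\|\le 1$. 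Since $V$ is jointly Lipschitz in $w$ and $A^{-1}$ (using the perturbation bound $\|A_1^{-1}-A_2^{-1}\|\le \|A_1^{-1}-A_2^{-1}\|$ on a compact set), a standard covering estimate gives
\[
\log\gN_\varepsilon(\gV,\|\cdot\|_\infty) \;\le\; C\bigl(d\log(1+HK/\varepsilon) + d^2\log(1+\beta^2 K/\varepsilon)\bigr),
\]
and taking $\varepsilon = 1/K$ yields $\log\gN_\varepsilon = O(d^2\log(dHK/\delta))$.

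Second, for each \emph{fixed} $\bar V$ in the net, the sequence
\[
X_\tau \;=\; \phi_h^\tau\bigl(\bar V(s_{h+1}^\tau) - \textstyle\sum_{s'} P_h(s'\mid s_h^\tau, a_h^\tau)\bar V(s')\bigr)
\]
is a martingale difference sequence adapted to the natural filtration, with $|\bar V(\cdot)|\le H$ and $\|\phi_h^\tau\|\le 1$. The self-normalized vector concentration of Abbasi-Yadkori et al.\ (Theorem~1) then gives, with probability $\ge 1-\delta'$,
\[
\Bigl\|\sum_{\tau=1}^{k-1} X_\tau\Bigr\|_{(\Lambda_h^k)^{-1}}^2 \;\le\; 2H^2\bigl(\tfrac{1}{2}\log\det(\Lambda_h^k) - \tfrac{1}{2}\log\det(I) + \log(1/\delta')\bigr),
\]
and $\log\det(\Lambda_h^k)\le d\log(1+k)$ by AM–GM applied to the eigenvalues.

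Third, I would union-bound over $(k,h)\in[K]\times[H]$ and over the $\varepsilon$-net with $\delta' = \delta/(4KH\cdot \gN_\varepsilon)$, then add the approximation error incurred by replacing $V_{h+1}^k$ with its closest net point: this error is at most $\varepsilon \cdot\bigl\|\sum_\tau \phi_h^\tau\bigr\|_{(\Lambda_h^k)^{-1}} \le \varepsilon\sqrt{dk}$ by Cauchy–Schwarz and the elliptical-potential bound $\sum_\tau \|\phi_h^\tau\|_{(\Lambda_h^k)^{-1}}^2 \le d\log(1+K)$. Plugging in $\varepsilon = 1/K$ the approximation term is negligible, and the dominant term becomes $O\bigl(H\sqrt{d\log(1+k) + \log\gN_\varepsilon + \log(KH/\delta)}\bigr) = O(dH\sqrt{\log(dKH/\delta)})$, matching the stated bound. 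The main obstacle will be the covering calculation — specifically, establishing the correct joint Lipschitz constant of $V$ in the parameters $(w,A)$ through the $\sqrt{\phi^\top A^{-1}\phi}$ bonus and the clipping operations, and verifying that the parameter ranges used in building the net are actually respected by the LSVI iterates $(w_h^k, \Lambda_h^k)$.
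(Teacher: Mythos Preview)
Your proposal is correct and follows essentially the same route as the paper: the paper's proof simply observes that because $r_h^k + u_h^k = (1+1/H)\,u_h^k$, the value function $V_{h+1}^k$ lies in exactly the parametric class of Lemma~D.6 in~\citep{jin2019provably}, and then defers the rest to Lemma~B.3 there. You have spelled out the covering-plus-self-normalized-concentration argument that Lemma~B.3 encapsulates, with the right parameter ranges ($\|w\|\le H\sqrt{dK}$, $A\succeq I$) and the correct $O(d^2\log(dHK/\delta))$ log-covering scaling, so the two proofs coincide in substance.
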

\begin{proof}
The proof is nearly identical to that of Lemma~B.3 in~\citep{jin2019provably}.
The only deference in our case is that we have a different reward functions at different episodes. 
However, 
note that in our case
\[
r^{k}_h(\cdot, \cdot)
= u_h^k(\cdot, \cdot)/H
\]
and hence
\[
r^{k}_h(\cdot, \cdot) + u_h^k(\cdot, \cdot)
 =  (1+1/H)\cdot\min\left\{\beta\cdot \sqrt{\phi(\cdot,\cdot)^\top(\Lambda_h^k)^{-1}\phi(\cdot,\cdot)}, H\right\}.
\]
Thus our value function $V^{k}_{h+1}$ is of the form 
\[
V(\cdot)
:=\min\left\{\max_{a}w^{\top }\phi(\cdot, a)+ \beta\cdot (1+1/H)\cdot \sqrt{\phi(\cdot, a)^\top \Lambda^{-1} \phi(\cdot, a)}, H\right\}
\]
for some $\Lambda\in \RR^{d\times d}$, and $w \in \mathbb{R}^d$.
Therefore, the value function shares exactly the same function class as that in Lemma~D.6 in~\citep{jin2019provably}. 
The rest of the proof follow similarly.
\end{proof}
We are now ready to prove  Lemma~\ref{lem:sum_V}.
\begin{proof}[Proof of Lemma~\ref{lem:sum_V}]
In our proof, we condition on the event $\cE$ defined in Lemma~\ref{lem:concentration}, which holds with probability at least $1-\delta/4$.
Since $P_h(s'|s,a) = \phi(s,a)^\top \mu_h(s')$,
we have
\[
\sum_{s'\in \cS }P_h(s'|s,a) V_{h+1}^k(s') = 
\phi(s,a)^\top \wt{w}^{k}_{h}
\]
where
\[
\wt{w}^{k}_h := \sum_{s'\in \cS }\mu_h(s') V_{h+1}^k(s')
\]
is an unknown vector.
By Assumption~\ref{asmp:lin_mdp}, 
$ \sum_{s'\in \cS }\mu_h(s') \le \sqrt{d}$.
Therefore,
\[
\|\wt{w}^{k}_h\|_{2}
\le H\sqrt{d}.
\]
We thus have, for all $(h, k)\in [H] \times [K]$ and $(s, a) \in \states \times \actions$, 
\begin{align*}
&\phi(s,a)^\top w^{k}_h - \sum_{s' \in \states} P_h(s' \mid s,a)^\top V_{h+1}^k(s')\\
=&
\phi(s,a)^\top (\Lambda^{k}_h)^{-1}
\sum_{\tau=1}^{k-1}\phi_h^\tau\cdot
V_{h+1}^k(s_{h+1}^{\tau}) - \sum_{s'\in \cS}P_h(s'|s,a) V_{h+1}^k(s')\\
=&
\phi(s,a)^\top (\Lambda^{k}_h)^{-1}\left(
\sum_{\tau=1}^{k-1}\phi_h^\tau
V_{h+1}^k(s_{h+1}^{\tau}) 
- \Lambda^{k}_h \wt{w}^{k}_{h}\right)\\
=&
\phi(s,a)^\top (\Lambda^{k}_h)^{-1}\left(
\sum_{\tau=1}^{k-1}\phi_h^\tau
V_{h+1}^k(s_{h+1}^{\tau}) 
-  \wt{w}^{k}_{h} 
- 
\sum_{\tau=1}^{k-1}\phi_h^\tau(\phi_h^\tau)^\top
\wt{w}^{k}_{h} 
\right)\\
=&
\phi(s,a)^\top (\Lambda^{k}_h)^{-1}\left(
\sum_{\tau=1}^{k-1}\phi_h^\tau
\left(V_{h+1}^k(s_{h+1}^{\tau}) - \sum_{s'}P_h(s'|s_h^\tau, a_h^\tau)V_{h+1}^{k}(s')\right)
-  \wt{w}^{k}_{h}
\right).
\end{align*}
We have,
\begin{align*}
&\Bigg|\phi(s,a)^\top (\Lambda^{k}_h)^{-1}\left(
\sum_{\tau=1}^{k-1}\phi_h^\tau
\left(V_{h+1}^k(s_{h+1}^{\tau}) - \sum_{s'}P_h(s'|s_h^\tau, a_h^\tau)V_{h+1}^{k}(s')\right)\right)\Bigg|\\
=& 
\left|\phi(s,a)^\top (\Lambda^{k}_h)^{-1/2}(\Lambda^{k}_h)^{-1/2}\left(
\sum_{\tau=1}^{k-1}\phi_h^\tau
\left(V_{h+1}^k(s_{h+1}^{\tau}) - \sum_{s'}P_h(s'|s_h^\tau, a_h^\tau)V_{h+1}^{k}(s')\right)\right)\right|\\
\le&
\|\phi(s,a)\|_{(\Lambda^{k}_h)^{-1}}\cdot
\left\|\sum_{\tau=1}^{k-1}\phi_h^\tau
\left(V_{h+1}^k(s_{h+1}^{\tau}) - \sum_{s'}P_h(s'|s_h^\tau, a_h^\tau)V_{h+1}^{k}(s')\right)\right\|_{(\Lambda^{k}_h)^{-1}}.
\end{align*}
By Lemma~\ref{lem:concentration},  we have
\begin{align*}
&\left|\phi(s,a)^\top (\Lambda^{k}_h)^{-1}\left(
\sum_{\tau=1}^{k-1}\phi_h^\tau
\left(V_{h+1}^k(s_{h+1}^{\tau}) - \sum_{s'}P_h(s'|s_h^\tau, a_h^\tau)V_{h+1}^{k}(s')\right)\right)\right|\\
\le& 
cdH\sqrt{\log(dKH/\delta)}\cdot 
\|\phi(s,a)\|_{(\Lambda^{k}_h)^{-1}}.
\end{align*}
Moreover, we have
\[
\left|\phi(s,a)^\top (\Lambda^{k}_h)^{-1}\wt{w}^{k}_{h}
\right|
\le \|\phi(s,a)\|_{(\Lambda^{k}_h)^{-1}}\cdot 
\|\wt{w}^{k}_{h}\|_{(\Lambda^{k}_h)^{-1}}
\le \|\phi(s,a)\|_{(\Lambda^{k}_h)^{-1}}\cdot H\sqrt{d}.
\]
Therefore, we have
\begin{align*}
&\left|
\phi(s,a)^\top w^{k}_h - \sum_{s' \in \states} P_h(s' \mid s,a) V_{h+1}^k(s')
\right|\\
\le& cdH\sqrt{\log(dKH/\delta)}\cdot 
\|\phi(s,a)\|_{(\Lambda^{k}_h)^{-1}}
+ \|\phi(s,a)\|_{(\Lambda^{k}_h)^{-1}}\cdot H\sqrt{d}\\
\le&
c_{\beta}dH\sqrt{\log(dKH/\delta)}\cdot 
\|\phi(s,a)\|_{(\Lambda^{k}_h)^{-1}}\\
=&\beta  \cdot \|\phi(s,a)\|_{(\Lambda^{k}_h)^{-1}}.
\end{align*}

Now we prove the first part of the lemma.

\paragraph{First Part.} 
Our proof is by induction on $h$.
Indeed, for  $h=H+1$, it holds that for all $s \in \states$,
\[
V_{H+1}^{*}(s, r^k)\le V_{H+1}^k(s)
\]
since $V_{H+1}^* = V_{H+1}^k = 0$.
Suppose for some $h\in [H]$, it holds that for all $s \in \states$, 
\[
V_{h+1}^{*}(s, r^k)\le V_{h+1}^k(s).
\]
Then we have
\begin{align*}
&V_{h}^{*}(s, r^k)
= \max_{a\in \cA}
\left(r^k_h(s,a) + \sum_{s' \in \states} P_{h}(s' \mid s,a) V_{h+1}^{*}(\cdot, r^k)\right)\\
\le& 
\max_{a\in \cA}
\left(r^k_h(s,a) + \sum_{s' \in \states}P_{h}(s' \mid s,a) V_{h+1}^{k}(s', r^k)\right).
\end{align*}
Notice that for all $(s, a) \in \states \times \actions$,
\[
\sum_{s' \in \states} P_{h}(s' \mid s,a)^\top V_{h+1}^{k}(s', r^k)
\le \phi(s,a)^\top w^{k}_h 
+ \beta\cdot \|\phi(s,a)\|_{(\Lambda^{k}_h)^{-1}}.
\]
We have
\[
V_{h}^{*}(s, r^k)
\le 
\min\left\{\max_{a\in \cA}
\left(r^k_h(s,a) + \phi(s,a)^\top w^{k}_h 
+ \beta\cdot \|\phi(s,a)\|_{(\Lambda^{k}_h)^{-1}}\right),
H\right\}
 = V_{h}^k(s)
\]
as desired.

\paragraph{Second Part.}
To prove the second part, for all  $(k,h)\in [K]\times[H - 1]$, we denote 
\[
\xi_h^k = \sum_{s'\in \cS}P(s'|s_h^k,a_h^k) V_{h+1}^k(s')
  - V_{h+1}^k(s_{h+1}^k).
\] 
Conditioned on $\cE$, 
\begin{align*}
\sum_{k=1}^KV_1^k(s_1^k)
&\le \sum_{k=1}^K
\left(r^k_1(s_1^k,a_1^k) + \phi(s_1^k,a_1^k) ^\top w^{k}_h 
+ \beta\cdot \|\phi(s_1^k,a_1^k) \|_{(\Lambda^{k}_1)^{-1}}\right)\\
&= 
\sum_{k=1}^K
\left(\phi(s_1^k,a_1^k) ^\top w^{k}_h 
+ (1+1/H)\cdot\beta\cdot \|\phi(s_1^k,a_1^k) \|_{(\Lambda^{k}_1)^{-1}}\right)\\
&\le \sum_{k=1}^K
\left( \sum_{s'\in \cS}P(s'|s_1^k,a_1^k) V_{2}^k(s')
+ (2+1/H)\cdot\beta\cdot \|\phi(s_1^k,a_1^k) \|_{(\Lambda^{k}_1)^{-1}}\right)\\
&\le 
\sum_{k=1}^K
\left(\xi_1^k+ V_{2}^k(s_2^k)
+ (2+1/H)\cdot\beta\cdot \|\phi(s_1^k,a_1^k) \|_{(\Lambda^{k}_1)^{-1}}\right)\\
&\le \ldots\\
&\le 
\sum_{k=1}^K\sum_{h=1}^{H - 1}\xi_h^k + 
\sum_{k=1}^K\sum_{h=1}^H(2+1/H)\cdot \beta\cdot \|\phi(s_h^k,a_h^k) \|_{(\Lambda^{k}_h)^{-1}}.
\end{align*}
Note that for each $h\in[H - 1]$, $\{\xi_{h}^k\}_{k=1}^K$ is a martingale difference sequence with $|\xi_{h}^k|\le H$.
Define $\cE'$ to be the even that  
\[
\left|\sum_{k=1}^K\sum_{h=1}^{H - 1}\xi_h^k\right|
\le c'H^2\sqrt{K\log(KH/\delta)}.
\]
By Azuma–Hoeffding inequality, we have $\Pr[\cE']\ge 1 - \delta/4$.

Next, we have,
\begin{align*}
\sum_{k=1}^K\sum_{h=1}^H \|\phi(s_h^k,a_h^k) \|_{(\Lambda^{k}_h)^{-1}}
\le  \sqrt{KH \sum_{k=1}^K\sum_{h=1}^H \phi(s_h^k,a_h^k)^\top (\Lambda^{k}_h)^{-1} \phi(s_h^k,a_h^k)}.
\end{align*}
By Lemma~D.2 in~\citep{jin2019provably}, we have
\[
\sum_{h=1}^H\sum_{k=1}^K\phi(s_h^k,a_h^k)^\top (\Lambda^{k}_h)^{-1} \phi(s_h^k,a_h^k)
\le 2dH\log(K).
\]
Conditioned on $\cE\cap \cE'$ which holds with probability at least $1-\delta/2$, we have
\begin{align*}
\sum_{k=1}^KV_1^k(s_1^k)
&\le 
c'H^2\sqrt{K\log(KH/\delta)} 
 + (2+1/H)\cdot \beta
 \cdot \sqrt{KH \cdot 2dH\log(K)}\\
&\le  c\sqrt{d^3H^4K\cdot\log(dKH/\delta)}
\end{align*}
for some absolute constant $c>0$.
\end{proof}

\subsection{Proof of Lemma~\ref{lem:small_V}}
\begin{proof}[Proof of  Lemma~\ref{lem:small_V}]
We denote $\Delta^k = V_1^*(s^k_1, r^k) - \EE_{s\sim \mu}[V_1^*(s, r^k)]$.
Since $r^k$ depends only on data collected during the first $k-1$ episodes,
$\{\Delta^k \}_{k=1}^K$ is a martingale difference sequence.
Moreover, $|\Delta^k|\le H$ almost surely.
Thus, by Azuma-Hoeffding inequality, we have, with probability at least $1-\delta/8$, there exists an absolute constant $c_1 > 0$, such that
\[
\left|\sum_{k=1}^K\Delta^k\right|
\le c_1 H\sqrt{K\log(1/\delta)},
\]
which we condition on in the rest of the proof.
Therefore, we have,
\[
\EE_{s\sim \mu}\left[\sum_{k=1}^KV_1^*(s, r^k)\right]
\le \sum_{k=1}^KV_1^*(s, r^k)
 + c_1 H\sqrt{K\log(1/\delta)}.
\]
Next, we notice that for all $k\in[K]$, 
\[
\Lambda_h\succeq \Lambda_h^k.
\]
Hence we have for all $(k,h)\in [K]\times[H]$, 
\[
r^k_h(\cdot, \cdot) \ge u_h(\cdot, \cdot)/H.
\]
Hence\[
V_1^*(\cdot, u_h/H) \le 
V_1^*(\cdot, r_h^k).
\]
Together with Lemma~\ref{lem:sum_V}, we have
\begin{align*}
\EE_{s\sim \mu}\big[
V_1^*(s, u_h/H)
\big]
&\le 
\EE_{s\sim \mu}\left[
\sum_{k=1}^KV_1^*(s, r^k)/K
\right]
\le K^{-1}\sum_{k=1}^KV_1^*(s_1^k, r^k)
+ c_1 H\sqrt{\log(1/\delta)/K}\\
&\le c' \sqrt{d^3H^4\cdot\log(dKH/\delta)/K}
\end{align*}
for some absolute constant $c'>0$.
\end{proof}

\subsection{Proof of Lemma~\ref{lem:confidence_planning}}
\begin{proof}[Proof of Lemma~\ref{lem:confidence_planning}]
Using the same argument in the proof of Lemma~\ref{lem:sum_V}, with probability at least $1 - \delta / 4$, 
for all $h \in [H]$ and $(s, a) \in \states \times \actions$, 
we have
\[
\left|
\phi(s,a)^\top w_h - \sum_{s' \in \states} P_h(s' \mid s,a) V_{h+1}(s')
\right| \le \beta  \cdot \|\phi(s,a)\|_{(\Lambda_h)^{-1}}.
\]
Therefore, for all $h \in [H]$ and $(s, a) \in \states \times \actions$, 
\begin{align*}
&Q_h(s, a) \le (w_h)^\top \phi(s, a) + r_{h}(s, a) +  u_h(s, a)\\
 \le&  r_h(s, a) + \sum_{s' \in \states} P_h(s' \mid s,a) V_{h+1}(s') + 2\beta  \cdot \|\phi(s,a)\|_{(\Lambda_h)^{-1}}.
\end{align*}
Moreover, $Q_h(s, a) \le H$.
Since $u_h(\cdot, \cdot)=\min\left\{\beta\cdot \sqrt{\phi(\cdot,\cdot)^\top(\Lambda_h)^{-1}\phi(\cdot,\cdot)}, H\right\}$,
we have
\[
Q_h(s, a) 
\le
r_h(s, a) 
+ \sum_{s'}P_{h}(s' \mid s, a)V_{h+1}(s')
+
2u_h(s, a).
\]

Now we prove for all $h \in [H]$ and $(s, a) \in \states \times \actions$, $Q^*_h(s,a, r)\le Q_h(s, a)$.
We prove by induction on $h$. When $h = H + 1$ this is clearly true. 
Suppose for some $h \in [H]$, $Q^*_{h + 1}(s,a, r)\le Q_{h + 1}(s, a)$ for all $(s, a) \in \states \times \actions$.
We have
\[
Q_{h}(s, a) = \min\{(w_h)^\top \phi(s, a) + r_{h}(s, a) +  u_h(s, a), H\}.
\]
Since $Q^*_{h + 1}(s,a, r) \le H$ and $u_h(\cdot, \cdot)=\min\left\{\beta\cdot \sqrt{\phi(\cdot,\cdot)^\top(\Lambda_h)^{-1}\phi(\cdot,\cdot)}, H\right\}$, it suffices to prove that
\[
Q^*_{h + 1}(s,a, r) \le (w_h)^\top \phi(s, a) + r_{h}(s, a) +  \beta  \cdot \|\phi(s,a)\|_{(\Lambda_h)^{-1}}.
\]
By the induction hypothesis, 
\begin{align*}
\phi(s,a)^\top w_h 
\ge& \sum_{s' \in \states} P_h(s' \mid s,a) V_{h+1}(s')  - \beta  \cdot \|\phi(s,a)\|_{(\Lambda_h)^{-1}}\\
\ge& \sum_{s' \in \states} P_h(s' \mid s,a) V_{h+1}^*(s', r)  - \beta  \cdot \|\phi(s,a)\|_{(\Lambda_h)^{-1}}.
\end{align*}
Therefore,
 \begin{align*}
Q_h^*(s, a, r)& =  r_h(s, a) + \sum_{s' \in \states} P_h(s' \mid s,a) V_{h+1}^*(s', r) \\
& \ge   (w_h)^\top \phi(s, a) + r_{h}(s, a) +  \beta  \cdot \|\phi(s,a)\|_{(\Lambda_h)^{-1}} .
 \end{align*}
\end{proof}

\subsection{Proof of Theorem~\ref{thm:main}}
\begin{proof}[Proof of Theorem~\ref{thm:main}]
In our proof we condition on the events defined in Lemma~\ref{lem:small_V} and Lemma~\ref{lem:confidence_planning} which hold with probability at least $1 - \delta$.
By Lemma~\ref{lem:confidence_planning}, for any $s \in \states$, 
\[
V_1(s) = \max_{a \in \actions} Q_1(s, a) \ge \max_{a \in \actions} Q_1^*(s, a, r) = V_1^*(s, r),
\]
which implies
\[
\EE_{s_1\sim \mu}[V^{*}_1(s_1, r)
- 
V^{\pi}_1(s_1, r)]
\le 
\EE_{s_1\sim \mu}[V_1(s_1)
- 
V^{\pi}_1(s_1, r)]
.
\]
Note that
\begin{align*}
&\EE_{s_1\sim \mu}[V_1(s_1)- V^{\pi}_1(s_1, r)]\\
= & \EE_{s_1\sim \mu}[Q(s_1, \pi_1(s_1))- Q^{\pi}_1(s_1, \pi_1(s_1), r)]\\
= & \EE_{s_1\sim \mu, s_2 \sim P_1(\cdot \mid s_1, \pi_1(s_1))}[r_1(s_1, \pi_1(s_1)) + V_2(s_2) + u_1(s_1, \pi(s_1)) - r_1(s_1, \pi_1(s_1)) - V^{\pi}_2(s_2)]\\
= & \EE_{s_1\sim \mu, s_2 \sim P_1(\cdot \mid s_1, \pi_1(s_1))}[V_2(s_2) + u_1(s_1, \pi(s_1)) - V^{\pi}_2(s_2)]\\
= & \EE_{s_1\sim \mu, s_2 \sim P_1(\cdot \mid s_1, \pi_1(s_1)), s_3 \sim P_2(\cdot, \mid s_2, \pi_2(s_2))}[u_1(s_1, \pi(s_1)) + u_2(s_2, \pi(s_2)) +  V_3(s_3)- V^{\pi}_3(s_3)]\\
= & \ldots\\
=& \EE_{s\sim \mu}[V^{\pi}_1(s, u)].
\end{align*}
By definition of $V^{*}_1(s, u)$, we have
\[\EE_{s\sim \mu}[V^{\pi}_1(s, u)]
\le 
\EE_{s\sim \mu}[V^{*}_1(s, u)].\]
By Lemma~\ref{lem:small_V},
\[
\EE_{s\sim \mu}[V^{*}_1(s, u)] = H \cdot\EE_{s\sim \mu}[V^{*}_1(s, u / H)]\le c'H\sqrt{d^3H^4\cdot \log(dKH/\delta)/K}.
\]
By taking $K = c_K \cdot d^3H^6\log(dH  \delta^{-1}\varepsilon^{-1}) / \varepsilon^2$ for a sufficiently large constant $c_K > 0$, we have
\[
\EE_{s_1\sim \mu}[V^{*}_1(s_1, r)
- 
V^{\pi}_1(s_1, r)] \le H \cdot\EE_{s\sim \mu}[V^{*}_1(s, u / H)]\le c'H\sqrt{d^3H^4\cdot \log(dKH/\delta)/K} \le \varepsilon,
\]
which implies $\pi$ is $\varepsilon$-optimal with respect to $r$.
\end{proof} \section{Reward-Free RL under Linear $Q^*$ Assumption with a Simulator}
\label{sec:simulator}
In this section, we present an algorithm for reward-free RL under the linear $Q^*$ assumption (Assumption~\ref{asmp:lin_q_star}) in deterministic systems,
when the agent has access to a generative model (a.k.a. simulator) of the MDP.
More specifically, for each state action $(s, a) \in \states \times \actions$, for each $h \in [H]$, we assume the agent can query $P_h(s, a)$.
We show that after querying the transition operator for polynomial number of times during the exploration phase, during the planning phase, the agent can find an optimal policy for any given reward function $r$.

\begin{algorithm}[!t]
	\caption{Reward-Free RL under Linear $Q^*$: Exploration Phase}
	\label{algo:main_linear_q_star}
	\begin{algorithmic}[1]
		\For{$h=1, 2, \ldots, H$}
		\State Find $(s_h^1, a_h^1), (s_h^2, s_h^2), \ldots, (s_h^{d}, a_h^{d})$ such that $\phi(s_h^1, a_h^1), \phi(s_h^2, s_h^2), \ldots, \phi(s_h^{d}, a_h^{d})$ form a set of linear basis of $\mathrm{span}\left(\{\phi(s, a)\}_{(s, a) \in \states \times \actions}\right)$
		\For{$i = 1, 2, \ldots, d$}
		\State Query $t_h^i \gets P_h(s_h^i, a_h^i)$
		\EndFor
		\EndFor
		\State \Return $\mathcal{D}\gets\{(s^i_h, a^i_h, t^i_h)\}_{(i,h)\in [d]\times[H]}$
	\end{algorithmic}
	\label{algo:explore}
\end{algorithm}

\begin{algorithm}[!t]
	\caption{Reward-Free RL under Linear $Q^*$: Planning Phase}
\label{algo:batch_ls_linear_q_star}
	\begin{algorithmic}[1]
		\State \textbf{Input}: Dataset $\mathcal{D}=\{(s^i_h, a^i_h, t^i_h)\}_{(i,h)\in [d]\times[H]}$, reward functions $r = \{r_h\}_{h \in [H]}$
		\State $Q_{H+1}(\cdot,\cdot)\gets 0$ and $V_{H + 1}(\cdot) = 0$
		\For{step $h=H, H-1, \ldots, 1$}
		\For{$i = 1, 2, \ldots, d$}
		\State $Q_h(s^i_h, a^i_h) \gets r_h(s^i_h, a^i_h) + V_{h + 1}(t^i_h)$
		\EndFor
		\State $Q_h(s, a) \gets \sum_{i = 1}^d \beta_i \cdot Q_h(s^i_h, a^i_h)$ if $\phi(s, a) = \sum_{i = 1}^d \beta_i \cdot \phi(s^i_h, a^i_h)$
		\State $V_h(\cdot) = \max_{a \in \actions} Q_h(\cdot, a)$
		\State $\pi_h(\cdot) \gets \argmax_{a \in \actions} Q_h(\cdot, a)$
		\EndFor

		\State \textbf{Return} $\pi = \{\pi_h\}_{h \in [H]}$
	\end{algorithmic}
\end{algorithm}

The exploration phase of our algorithm is described in Algorithm~\ref{algo:main_linear_q_star}, while the planning phase is described in Algorithm~\ref{algo:batch_ls_linear_q_star}.

During the exploration phase, for each level $h$, we find $(s_h^1, a_h^1), (s_h^2, s_h^2), \ldots, (s_h^{d}, a_h^{d})$ such that \[\phi(s_h^1, a_h^1), \phi(s_h^2, s_h^2), \ldots, \phi(s_h^{d}, a_h^{d})\] form a set of linear basis of $\mathrm{span}\left(\{\phi(s, a)\}_{(s, a) \in \states \times \actions}\right)$ by querying the feature extractor $\phi$.
Then we query $P_h(s_h^i, a_h^i)$ for each $i \in [d]$.
During the planning phase, for each $(i, h) \in [d] \times [H]$, we calculate $Q_h(s_h^i, a_h^i) = r_h(s_h^i, a_h^i) + V_{h + 1}(P_h(s_h^i, a_h^i))$ by the Bellman equation.
For each $(s, a) \in \states \times \actions$, we can always find $\beta$ such that $\phi(s, a) = \sum_{i = 1}^d \beta_i \cdot \phi(s_h^i, a_h^i)$, since $\phi(s_h^1, a_h^1), \phi(s_h^2, s_h^2), \ldots, \phi(s_h^{d}, a_h^{d})$ form a set of linear basis of $\mathrm{span}\left(\{\phi(s, a)\}_{(s, a) \in \states \times \actions}\right)$.
Due to the linearity of the optimal $Q$-function, we set $Q_h(s, a) = \sum_{i = 1}^d \beta_i \cdot Q_h(s^i_h, a^i_h)$.
We define the $V$-function and the policy accordingly. 

Notice that during the exploration phase, the algorithm query the transition operator for $dH$ times in total.
To prove the correctness, we prove by induction on $h$ that during the planning phase, $Q_h(\cdot, \cdot) = Q^*_h(\cdot, \cdot)$.
Note that this is clearly true when $h = H + 1$.
Suppose $Q_{h + 1}(\cdot, \cdot) = Q^*_{h + 1}(\cdot, \cdot)$.
It is clear that $V_{h + 1}(\cdot) = V^*_{h + 1}(\cdot)$, which implies $Q_h(s^i_h, a^i_h) = Q_h^*(s^i_h, a^i_h)$ by the Bellman equation.
By Assumption~\ref{asmp:lin_q_star}, if $\phi(s, a) = \sum_{i = 1}^d \beta_i \cdot \phi(s^i_h, a^i_h)$, 
\[
Q_h(s, a) = \sum_{i = 1}^d \beta_i \cdot Q_h(s^i_h, a^i_h) = \sum_{i = 1}^d \beta_i \cdot Q_h^*(s^i_h, a^i_h)  = Q^*_h(s, a).
\] 

\section{Missing Proofs in Section~\ref{sec:hardness}}
In this hard instance construction in Section~\ref{sec:hardness}, for each $h \in [H - 2]$, for each $(s, a) \in \cS_h \times \actions$, we define $\phi(s, a) \in \mathbb{R}^{d}$ so that $\|\phi(s, a)\|_2 = 1$ and for any $(s', a') \in \cS_h \times \actions \setminus \{(s, a)\}$, we have $|\left(\phi(s, a)\right)^{\top} \phi(s', a')| \le 0.01$.
The following lemma demonstrates the existence of such feature extractor.  
\begin{lem}
There exists a set of vectors $\{\phi_1, \phi_2, \ldots, \phi_{2^H}\} \subset \mathbb{R}^d$ with $d = \mathrm{poly}(H)$ such that
\begin{enumerate}
\item $\|\phi_i\| = 1$ for all $i \in [2^H]$;
\item $|\phi_i^{\top}\phi_j| \le 0.01$ for all $i, j \in [2^H]$ with $i \neq j$.
\end{enumerate}
\end{lem}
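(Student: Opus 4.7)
The plan is to construct the vectors via a random projection (Johnson--Lindenstrauss): start with $2^H$ \emph{exactly} orthonormal vectors in a high ambient dimension, then project them into $\mathbb{R}^d$ with $d = \mathrm{poly}(H)$ while approximately preserving all pairwise distances, and finally renormalize.

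First I would take the standard basis $e_1, \ldots, e_{2^H} \in \mathbb{R}^{2^H}$, which trivially satisfies $\|e_i\|=1$ and $e_i^\top e_j = 0$ for $i \neq j$. By the Johnson--Lindenstrauss lemma, for any $\epsilon \in (0, 1/2)$ there exists a linear map $A : \mathbb{R}^{2^H} \to \mathbb{R}^d$ with $d = O(\epsilon^{-2}\log(2^H)) = O(\epsilon^{-2} H)$ such that, applied to the finite set $\{0, e_1, \ldots, e_{2^H}\}$, every pairwise squared distance is preserved up to a multiplicative $(1 \pm \epsilon)$ factor. Writing $v_i := A e_i$, this immediately gives $\|v_i\|^2 \in [1-\epsilon, 1+\epsilon]$ (distance of $v_i$ to the origin) and $\|v_i - v_j\|^2 \in [2(1-\epsilon), 2(1+\epsilon)]$ for $i \neq j$.

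Next I would convert the distance bounds into inner-product bounds via the polarization identity
\[
2\, v_i^\top v_j \;=\; \|v_i\|^2 + \|v_j\|^2 - \|v_i - v_j\|^2.
\]
Plugging in the ranges above yields $|v_i^\top v_j| \le 2\epsilon$ for $i \neq j$. Finally, setting $\phi_i := v_i / \|v_i\|$ produces exactly unit-norm vectors with
\[
|\phi_i^\top \phi_j| \;\le\; \frac{2\epsilon}{1-\epsilon}.
\]
Choosing $\epsilon$ to be a sufficiently small absolute constant (say $\epsilon = 1/500$) makes this bound at most $0.01$, while the dimension remains $d = O(H) = \mathrm{poly}(H)$, completing the construction.

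There is no real obstacle here; this is essentially a textbook application of JL. The only care required is in the bookkeeping: one must include the origin in the point set (or equivalently apply JL to norms as well as to pairwise distances) so that both $\|v_i\|$ and $\|v_i - v_j\|$ are controlled, and then combine the two estimates through polarization \emph{before} normalizing, so that the final renormalization only inflates the cross-term bound by a harmless $1/(1-\epsilon)$ factor.
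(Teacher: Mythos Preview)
Your proposal is correct and is essentially the same approach as the paper's: the paper simply cites Lemma~A.1 of \citep{Du2020Is} (with $n = 2^H$ and $\varepsilon = 0.01$), which is precisely the Johnson--Lindenstrauss construction you spell out in detail. Your bookkeeping (including the origin, using polarization, and controlling the renormalization factor) is all sound.
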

\begin{proof}
This is a direct implication of Lemma A.1 in~\citep{Du2020Is} by setting $ n = 2^H$ and $\varepsilon = 0.01$.
\end{proof}
Note that the above lemma implies the existence of the required feature exactor, since for each $h \in [H - 2]$, there are less than $2^H$ state-action pairs in $\cS_h \times \actions$. 
We simply define the feature of the $i$-th state-action pair in $\cS_h \times \actions$ to be $\phi_i$ in the above lemma. 

\begin{proof}[Proof of Theorem~\ref{thm:lb}]
In order to prove Theorem~\ref{thm:lb}, by Yao's minimax principle~\citep{yao1977probabilistic}, it suffices to prove that for the hard distribution constructed in Section~\ref{sec:hardness}, for any deterministic algorithm $\mathcal{A}$ that samples at most $2^{H} / 100$ trajectories during the exploration phase, the probability (over the randomness of the hard distribution) that $\mathcal{A}$ outputs a $0.1$-optimal policy in the planning phase is at most $0.9$.

We first show that for the deterministic algorithm $\mathcal{A}$, among all the $2^{H - 2}$ choices for $(s_{H - 2}^*, a_{H - 2}^*)$, $s_{H - 1}^+$ is in the collected dataset $\mathcal{D}$ for at most $2^H / 100$ choices for $(s_{H - 2}^*, a_{H - 2}^*)$ during the exploration phase. 
Note that whenever $(s_{H - 2}, a_{H - 2}) \neq (s_{H - 2}^*, a_{H - 2}^*)$, we must have $s_{H - 1} = s_{H - 1}^-$ and $s_{H} = s_{H}^-$.
Therefore, the feedback received by $\mathcal{A}$ is always the same unless $(s_{H - 2}, a_{H - 2}) = (s_{H - 2}^*, a_{H - 2}^*)$.
However, since $\mathcal{A}$ samples at most $2^{H} / 100$ trajectories during the exploration phase, there are most $2^H / 100$ choices for $(s_{H - 2}^*, a_{H - 2}^*)$ during the exploration phase for which $s_{H - 1}^+$ is in the collected dataset $\mathcal{D}$.

Recall that $\mathcal{A}$ is deterministic.
For any choice of $(s_{H - 2}^*, a_{H - 2}^*)$, if $s_{H - 1}^+$ is not in the collected dataset $\mathcal{D}$, the collected dataset $\mathcal{D}$ is always the same, no matter $a_{H - 1}^* = 0$ or $a_{H - 1}^* = 1$.
Moreover, for any fixed choice of $(s_{H - 2}^*, a_{H - 2}^*)$, it can be verified that the reward function $r$ does not depend on the choice of $a_{H - 1}^*$.
Note that during the planning phase, algorithm $\mathcal{A}$ deterministically maps the collected dataset $\mathcal{D}$ and the reward function $r$ to a policy.
Furthermore, the only $0.1$-optimal policy must satisfy $\pi(s_h^*) = a_h^*$.
However, for any choice of $(s_{H - 2}^*, a_{H - 2}^*)$, if $s_{H - 1}^+$ is not in the collected dataset $\mathcal{D}$, $\pi(s_{H - 1}^*)$ does not depend on $a_{H - 1}^*$ since both the collected dataset $\mathcal{D}$ and the reward function $r$ do not depend on $a_{H - 1}^*$.
Therefore, for those choices of $(s_{H - 2}^*, a_{H - 2}^*)$, $\mathcal{A}$ outputs a $0.1$-optimal policy with probability at most $0.5$.
Therefore, the probability that $\mathcal{A}$ outputs a $0.1$-optimal policy is at most 
\[
\frac{2^H/100}{2^{H - 2}} + \left(1 - \frac{2^H/100}{2^{H - 2}}\right) / 2 \le 0.6.  
\]
\end{proof}

\end{document}